\documentclass[11pt]{article}
\usepackage{fullpage}
\usepackage[numbers]{natbib}
\usepackage{algorithm}
\usepackage[noend]{algorithmic}
\usepackage{amsmath,amsthm,amsfonts,amssymb}
\usepackage{amsmath}
\usepackage{hyperref}
\usepackage{color}
\usepackage{xcolor}
\usepackage{mathrsfs}
\usepackage{enumitem}
\usepackage{bm}
\usepackage{multirow}
\usepackage{booktabs}
\usepackage{makecell}
\usepackage{graphicx}
\usepackage{subfigure}
\usepackage{caption}
\usepackage{thmtools}
\usepackage{thm-restate}
\usepackage{setspace}
\usepackage{makecell}
\definecolor{light-gray}{gray}{0.85}
\usepackage{colortbl}
\usepackage{hhline}
\usepackage{accents}

\newcommand{\defeq}{\mathrel{\mathop:}=}

\newcommand{\argmax}{\mathop{\rm argmax}}

\newcommand{\trans}{^{\top}}

\newcommand{\poly}{\mathrm{poly}}

\newcommand{\E}{\mathbb{E}}

\renewcommand{\P}{\mathbb{P}}

\newcommand{\cO}{\mathcal{O}}


\newcommand{\N}{\mathbb{N}}
\newcommand{\R}{\mathbb{R}}




\newcommand{\cS}{\mathcal{S}}
\newcommand{\cA}{\mathcal{A}}

\newenvironment{proof-sketch}{\noindent{\bf Proof Sketch}
  \hspace*{1em}}{\qed\bigskip\\}
\newenvironment{proof-idea}{\noindent{\bf Proof Idea}
  \hspace*{1em}}{\qed\bigskip\\}
\newenvironment{proof-of-lemma}[1][{}]{\noindent{\bf Proof of Lemma {#1}}
  \hspace*{1em}}{\qed\bigskip\\}
\newenvironment{proof-of-proposition}[1][{}]{\noindent{\bf
    Proof of Proposition {#1}}
  \hspace*{1em}}{\qed\bigskip\\}
\newenvironment{proof-of-theorem}[1][{}]{\noindent{\bf Proof of Theorem {#1}}
  \hspace*{1em}}{\qed\bigskip\\}
\newenvironment{inner-proof}{\noindent{\bf Proof}\hspace{1em}}{
  $\bigtriangledown$\medskip\\}
\newenvironment{proof-attempt}{\noindent{\bf Proof Attempt}
  \hspace*{1em}}{\qed\bigskip\\}


  \newcommand{\Reg}{{\rm Regret}}
  
  \renewcommand{\a}{\mathbf{a}}


\newcommand{\bp}{\mathbf{p}}
\newcommand{\NL}{N^{\rm lazy}}
\newcommand{\up}[1]{\mkern 1.5mu\overline{\mkern-1.5mu#1\mkern-1.5mu}\mkern 1.5mu}
\renewcommand{\hat}{\widehat}

\newcommand{\BR}{{\rm BR}}
\newcommand{\OPMD}{\textsf{OP-EXP3}}
\newcommand{\AOPMD}{\textsf{adaptive}~\textsf{OP-EXP3}}

\usepackage{times}

\newtheorem{theorem}{Theorem}
\newtheorem{lemma}[theorem]{Lemma}

\newtheorem{proposition}[theorem]{Proposition}

\theoremstyle{definition}
\newtheorem{definition}[theorem]{Definition}

\renewcommand{\up}[1]{\overline{#1}}

\renewcommand{\Reg}{{\rm Regret}}

\newfloat{subroutine}{htbp}{loa}
\floatname{subroutine}{Subroutine}


\title{\bf Learning Markov Games with Adversarial Opponents: \\Efficient Algorithms and Fundamental Limits}

\author{%
   Qinghua Liu$^{\star}$
       \and
       Yuanhao Wang$^{\star}$ 
       \and
       Chi Jin$^{\dagger}$ 
}
\date{}

\begin{document}

\maketitle

\renewcommand{\O}{\mathbb{O}}
\newcommand{\T}{\mathbb{T}}
\newcommand{\Ot}{\widetilde{\O}}
\newcommand{\mM}{\mathbf{M}}
\newcommand{\mN}{\mathbf{N}}
\newcommand{\mX}{\mathbf{X}}
\newcommand{\mB}{\mathbf{B}}
\newcommand{\mE}{\mathbf{E}}
\newcommand{\rowspan}[1]{ {\rm rowspan}(#1)}
\newcommand{\colspan}[1]{ {\rm colspan}(#1)}
\newcommand{\Mhat}{\widehat{\mathbf{M}}}
\newcommand{\Nhat}{\widehat{\mathbf{N}}}
\newcommand{\Bhat}{\widehat{\mathbf{B}}}
\newcommand{\Ahat}{\widehat{\mathbf{A}}}

\newcommand{\alglinelabel}{%
  \addtocounter{ALC@line}{-1}
  \refstepcounter{ALC@line}
  \label
}
\makeatletter
\def\blfootnote{\gdef\@thefnmark{}\@footnotetext}
\makeatother

\begin{abstract}

    An ideal strategy in zero-sum games should not only grant the player an average reward no less than the value of Nash equilibrium, but also exploit the (adaptive) opponents when they are suboptimal. While most existing works in Markov games focus exclusively on the former objective, it remains open whether we can achieve both objectives simultaneously. To address this problem, this work studies no-regret learning in Markov games with adversarial opponents when \emph{competing against the best fixed policy in hindsight}. Along this direction, we present a new complete set of positive and negative results:
    
    When the policies of the opponents are revealed at the end of each episode, we propose new efficient algorithms achieving $\sqrt{K}$-regret bounds when either (1) the baseline policy class is small or (2) the opponent’s policy class is small. This is complemented with an exponential lower bound when neither conditions are true. When the policies of the opponents are not revealed, we prove a statistical hardness result even in the most favorable scenario when both above conditions are true. Our hardness result is much stronger than the existing hardness results which either only involve computational hardness, or require further restrictions on the algorithms.
    
    \end{abstract}


\section{Introduction}

\blfootnote{$^\star$ Equal contribution;  Princeton University; Email: 
   \texttt{\{qinghual,yuanhao\}@princeton.edu} }

\blfootnote{$^\dagger$ Princeton University; Email: 
   \texttt{\{chij\}@princeton.edu} }

Multi-agent reinforcement learning (MARL) studies how multiple players sequentially interact with each other and the environment to maximize the  cumulative rewards. 
Recent years have witnessed inspiring breakthroughs in the application of multi-agent reinforcement learning to various challenging AI tasks, including, but not limited to, GO \citep{silver2016mastering,silver2017mastering}, Poker \citep{brown2019superhuman}, real-time strategy games (e.g., StarCraft and Dota) \citep{vinyals2019grandmaster, openaidota},  autonomous driving \citep{shalev2016safe}, 
decentralized controls or multi-agent robotics systems \citep{brambilla2013swarm},
as well as complex social scenarios such as hide-and-seek~\citep{baker2020emergent}. 

Despite its great empirical success, MARL still suffers from limited theoretical understanding with many fundamental questions left open. Among them, one central and challenging question is how to exploit the (adaptive) suboptimal opponents while staying invulnerable to the optimal opponents. Achieving this objective requires a solution concept beyond Nash equilibria. As a motivating example, we consider the game of rock-paper-scissors with a suboptimal opponent who plays rock in the first $K/2$ games and then switches to paper in the next $K/2$ games. A strategic player in this case should be able to learn from the behavior of the opponent and exploit it to get a return of $\Omega(K)$. In contrast, playing a Nash equilibrium (which plays all actions uniformly) only yields an average return of zero.

In classical normal-form games (which can be viewed as special cases of MARL without transition and states), the question of exploiting adaptive opponents has been extensively studied under the framework of no-regret learning, where the agent is required to compete against the best fixed policy in hindsight even when facing adversarial opponents \citep[see e.g.,][]{cesa2006prediction}. On the other hand, addressing general MARL brings a number of new challenges such as unknown environment dynamics and sequential correlations between the player and the opponents. Consequently, all existing results \citep[e.g.,][]{brafman2002r,wei2017online,tian2021online,jin2021power} have only focused on competing against Nash equilibria when facing adversarial opponents. This motivates us to ask the following question for MARL:
\begin{center}
    \textbf{Can we compete against the best fixed policy in hindsight and achieve no-regret learning in MARL?}
\end{center}

In this paper, we consider two-player zero-sum Markov games \citep{shapley1953stochastic,littman1994markov} as a model for MARL, and address the above question by providing a complete set of positive and negative results as follows. We refer to \emph{general policies} as policies which can depend on the entire history, in contrast to \emph{Markov policies}, which can only depend on the state at the current step.

\paragraph{Statistical efficiency (standard setting).}

We first consider the most standard setting, in which only the actions of the opponents are observed, and prove an exponential  lower bound for the regret. 
Importantly, the lower bound holds even if the baseline policy class only contains Markov policies and the opponent only alternates between a small number of Markov policies. 
Besides, this hardness result is much stronger than the existing ones which either only involve computational hardness \citep{bai2020near}, or require further restrictions on the algorithms \citep{tian2021online}.
The proof of the lower bound builds upon the key observation that we can simulate any POMDP/latent MDP by a Markov game of similar size and an opponent playing general/Markov policies. This directly implies  no-regret learning in MGs is no easier than learning POMDPs/latent MDPs which is  statistically intractable in general \citep{jin2020sample,kwon2021rl}.

\paragraph{Statistical efficiency (revealed-policy setting).} 
Given that only observing actions of  opponents is insufficient for achieving sublinear regret, we then consider a setting more advantageous  to the learner, in which the opponent reveals the policy she played at the end of each episode.
    \begin{itemize}
 \item When baseline policies---the set of policies we are competing against in the definition of regret (see Definition \ref{def:reg})---are Markov policies, we propose \textsl{\textbf{O}ptimistic \textbf{P}olicy \textbf{EXP3}} (\OPMD, Algorithm  \ref{alg:md}) that has $\tilde{\cO}(\sqrt{H^4 S^2AK})$-regret even when the opponent can play arbitrary general (history-dependent) policies, where 
        $H$ is the length of each episode, $S$ is the number of states, $A$ is the number of actions, and $K$ is the number of episodes. 
        \item When baseline policies are general policies, We further propose  \AOPMD~(Algorithm \ref{alg:md-restart}) that achieves regret $\tilde\cO (\sqrt{H^4S^2AK}+\sqrt{|\Psi^\star|SAH^3K} +\sqrt{|\Psi^\star|^2H^2K})$ 
        when the opponent only chooses policies from an unknown policy class $\Psi^\star$.
        \item Finally, we complement our upper bounds with an exponential lower bound for competing against general policies, which holds even when the opponent only plays deterministic Markov policies. 
    \end{itemize}

    \paragraph{Computational efficiency.}
    Finally, we prove that achieving sublinear regret is computationally hard even in the very favorable setting where (a) the learner only competes against the best fixed Markov policy in hindsight, (b) the opponent only chooses policies randomly from a known small set of Markov policies and reveals the policy she played at the end of each episode, (c) the MG model is known. 
        We emphasize that this computational hardness holds under very weak conditions as stated above, and applies to all the settings studied in this paper.

To summarize, we provide a complete set of results  including both efficient algorithms and fundamental limits for no-regret learning in Markov games with adversarial opponents. 
We refer the reader to Table \ref{table:1} for a brief summary of our main results.

\begin{table}[t]
    \renewcommand{\arraystretch}{1.2} 
    \centering
    \resizebox{\columnwidth}{!}{
	\begin{tabular}{|c|c|c|c|}
		\hline
		\textbf{Baseline Policies}          & \textbf{Opponent's Policies} & \textbf{Standard Setting}                                       & \textbf{Revealed-policy Setting}                            \\ \hline\hline
		{Markov policies} &{General policies}  & \multirow{3}{*}{\shortstack{ $\Omega(\min\{K, 2^H\}/H)$}}            &   {$\tilde{\cO}(\sqrt{H^4 S^2AK})$} \\
         \cline{1-2}\cline{4-4}
		\multirow{2}{*}{General policies}   & Finite class $\Psi^\star$ &  &                                        $\tilde\cO (\sqrt{H^4S^2AK}+\sqrt{|\Psi^\star|SAH^3K} +\sqrt{|\Psi^\star|^2H^2K})$                                                     \\ \cline{2-2}\cline{4-4}
		& Markov policies            &&                                           $\Omega(\min\{K, 2^H\})$                                                  \\ \hline
	\end{tabular}}
\caption{A summary of the main results. 
Baseline policies refer to the policies the algorithm competes against in the definition of regret (see Definition \ref{def:reg}).
General policies include both Markov and history-dependent policies.\label{table:1}}
\end{table}

\section{Related Work}

\paragraph{Learning Nash equilibria in Markov games.} There has been a long line of literature focusing on learning the Nash equilibrium of Markov games when either the dynamics are known, or the amount of collected data goes to infinity~\citep{littman1994markov, hu2003nash,hansen2013strategy,lee2020linear}. Later works have considered self-play algorithms that incorporate exploration and can find Nash equilibrium in  Markov games with unknown dynamics~\citep{wei2017online,bai2020near,bai2020provable,xie2020learning,liu2021sharp}.

When the algorithm is only able to control one player and the other player is potentially adversarial, \citet{brafman2002r} proposed the R-max algorithm, and showed that it is able to obtain average value close to the Nash value. 
Later works \citep{wei2017online,tian2021online,jin2021power} obtain similar or improved results also for comparing to the Nash value.

\paragraph{Learning latent MDPs.} In latent MDPs, sometimes also referred as multi-model MDPs, a latent variable is drawn from a fixed distribution at the start of each episode, and the dynamics of the MDP would be a function of this latent variable.~\citet{steimle2021multi} has shown that finding the optimal Markov policy in the latent MDP problem is computational hard;~\citet{kwon2021rl} considered reinforcement learning in latent MDPs, providing both statistical lower bounds for the general case and sample complexity upper bounds under further assumptions. Latent MDPs, and in fact POMDPs \citep{smallwood1973optimal,azizzadenesheli2016reinforcement,jin2020sample} in general, can be simulated using Markov games with adversarial opponents as proved in this paper; thus learning latent MDPs can be viewed as a special case of the setting considered in this paper.

\paragraph{Adversarial MDPs.} Another line of work focuses on the single-agent adversarial MDP setting where the transition  or the reward function is adversarially chosen for each episode. 
When the adversary can arbitrarily alter the transition, 
\citet{abbasi2013online} prove that no-regret learning is computationally at least as hard as learning parity with noise.
Later work by \citet{bai2020near} adapt similar hard instance for  Markov games and prove that  achieving sublinear regret in MGs against adversarial opponents  is also computationally hard. 
On the other hand, if the transition is fixed and the adversary is only allowed to alter the reward function, sublinear regret can be achieved by various algorithms~\citep{jin2019learning,zimin2013online,rosenberg2019online,shani2020optimistic} in  competing against the best Markov policy in hindsight. 

\paragraph{Matrix games and extensive form games.} For matrix games, it is well known that playing EXP-style algorithms would allow one to compete with the best policy (action profile) in hindsight  \citep[see e.g.,][]{cesa2006prediction}. For extensive form games (EFGs), similar no-regret guarantees can be achieved via counterfactual regret minimization~\citep{zinkevich2007regret} or online convex optimization~\citep{gordon2007no,farina2020faster,farina2021model,kozuno2021model}. 
EFGs can be viewed a special subclass of MGs where the transition admits a strict tree structure. 
Therefore, results for EFGs do not directly apply to MGs.

\section{Preliminaries} \label{sec:prelim}
In this paper, we consider Markov Games \citep[MGs,][]{shapley1953stochastic, littman1994markov}, which generalize the standard Markov Decision Processes (MDPs) into the multi-player setting, where  each player seeks  maximizing her own utility.

Formally, we study the tabular episodic version of two-player zero-sum Markov games, which is specified by a tuple $( \cS, \cA, H, \P, r)$. Here $\cS$ denotes the state set with $|\cS| \le S$. $\cA=\cA_{\max}\times\cA_{\min}$ (with $|\cA| \le A$)  denotes the action-pair set that is equal to the Cartesian product of the action set of the max-player $\cA_{\max}$ and  the action set of the min-player $\cA_{\min}$.
 $H$ denotes the length of each episode.
$\P = \{\P_h\}_{h\in[H]}$ denotes a  collection of transition matrices, so that $\P_h ( \cdot | s, \a)$ gives the distribution of the next state if action-pair $\a\in\cA$ is taken at state $s$ at step $h$.
 $r = \{r_h\}_{h\in[H]}$ denotes a collection of expected reward functions, where $r_h \colon \cS \times \cA \to [0,1]$ is the expected reward function at step $h$. This reward represents both the gain of the max-player and the loss of the min-player, making the problem a zero-sum Markov game.
 For cleaner presentation, we assume the reward function is known in this work.\footnote{Our results immediately generalize to unknown reward functions effortlessly, since learning the transitions is more difficult than learning the rewards in tabular MGs.}

In each episode, the environment starts from a \emph{fixed initial state} $s_1$. At step $h \in [H]$, both
players observe state $s_h \in \cS$, and then pick their own actions $a_{h,\max} \in \cA_{\max}$ and $a_{h,\min} \in \cA_{\min}$ simultaneously. 
After that, both players observe the action of their opponent, receive reward
$r_h(s_h, \a_h)$, and then the environment transitions to the next state
$s_{h+1}\sim\P_h(\cdot | s_h, \a_h)$. The episode terminates immediately once
$s_{H+1}$ is reached. 

We use $\tau_h=(s_1,\a_1,\ldots,s_{h-1},\a_{h-1},s_h)\in(\cS\times\cA)^{h-1}\times\cS$ to denote a  trajectory from step $1$ to step $h$, which includes the state but excludes the action at step $h$. We use box brackets to denote the concatenation of trajectories, e.g., $[\tau_h,\a_h,s_{h+1}]\in(\cS\times\cA)^{h}\times\cS$ gives a  trajectory  from step $1$ to step $h+1$ by concatenating $\tau_h$ with an action-state pair $(\a_h,s_{h+1})$.

\paragraph{Policy.} 
We consider two classes of policies: Markov policies and general policies.
A \textbf{Markov policy} $\mu=\{ \mu_h: \cS \rightarrow \Delta_{\cA_{\max}} \}_{h\in [H]}$ of the max-player is a collection of $H$ functions, each mapping from a state to a distribution over actions. (Here $\Delta_{\cA_{\max}}$ is the probability simplex over action set $\cA_{\max}$.) Similarly, a Markov policy of the min-player is of form $\nu=\{ \nu_h: \cS \rightarrow \Delta_{\cA_{\min}} \}_{h\in [H]}$. 
Different from Markov policies, a \textbf{general policy} can choose actions depending on the entire history of interactions. Formally, a general policy  
$\mu=\{ \mu_h: (\cS\times\cA)^{h-1}\times\cS \rightarrow \Delta_{\cA_{\max}} \}_{h\in [H]}$ of the max-player is a collection of $H$ functions where each function maps a trajectory  to a distribution over actions.  The definition of general policies of the min-player follows similarly. We remark that Markov policies are special cases of general policies, which pick actions only conditioning on the current state.

\paragraph{Value function.} 
Given any pair of general policies $(\mu,\nu)$, 
 we use $V^{\mu\times \nu}_1(s_1)$ to denote its value function, 
 which is equal to the expected cumulative rewards
received by the max-player, if the game starts at state $s_1$ at the $1^{\rm th}$ step and the max-player and the min-player follow policy $\mu$ and $\nu$ respectively:
\begin{equation} \label{eq:V_value}
        \textstyle
 V^{\mu\times \nu}_1(s_1) \defeq \E_{\mu\times \nu}\left[\left.\sum_{h =
        1}^H r_{h}(s_{h}, \a_{h}) \right| s_1 \right],
\end{equation}
where the expectation is taken with respect to the randomness of $\a_1,s_2,\a_2,\ldots,s_H,\a_H$.

\paragraph{Best response and Nash equilibrium.}
Given any general policy of the max-player $\mu$, there exists a \emph{best response} of the min-player
$\nu^\dagger(\mu)$ so that  $V_1^{\mu \times \nu^\dagger(\mu)}(s_1) = \inf_{\nu} V_1^{\mu\times \nu}(s_1)$.
For brevity of notations, we denote $V_1^{\mu, \dagger} \defeq V_1^{\mu\times\nu^\dagger(\mu)}$. By symmetry, we
can also define $\mu^\dagger(\nu)$ and $V_1^{\dagger,\nu}$.   
Moreover, previous works \citep[e.g.,~][]{filar2012competitive} prove that there exist policies $\mu^\star$, $\nu^\star$
that are optimal against the best responses of the opponents, in the sense that
\begin{equation*}        \textstyle
  V^{\mu^\star, \dagger}_1(s_1) = \sup_{\mu}
      V^{\mu, \dagger}_1(s_1), 
      \quad  V^{\dagger, \nu^\star}_1(s_1) = \inf_{\nu}
      V^{\dagger, \nu}_1(s_1).
\end{equation*}
We refer to such strategies $(\mu^\star,\nu^\star)$ as the Nash equilibria of the Markov game. 
Importantly, any Nash equilibrium satisfies the following minimax theorem 
\footnote{We remark that  the minimax theorem for MGs is different from the one for matrix games, i.e. $\max_x\min_y x\trans A y = \min_y \max_x x\trans Ay$ for any matrix $A$, because $V^{\mu \times \nu}_1(s_1)$ is in general not bilinear in $\mu, \nu$.}:
\begin{equation*}        \textstyle
 \sup_{\mu} \inf_{\nu} V^{\mu\times\nu}_1(s_1) = V^{\mu^\star \times \nu^\star}_1(s_1) = \inf_{\nu} \sup_{\mu} V^{\mu\times \nu}_1(s_1).
\end{equation*}
The minimax theorem above directly implies the value function of Nash equilibria is unique, which we denote as $V_1^{\star}(s_1)$.
Furthermore, it is also known that there always exists a Markov Nash equilibrium in the sense that both $\mu^\star$ and $\nu^\star$ are Markov.
Intuitively, a Nash equilibrium gives a solution in which no player can benefit  from unilateral deviation.

\paragraph{Learning objective.}
In this work, we study no-regret learning of Markov games with adversarial opponents, and 
 measure the performance of an algorithm by its regret against the best fixed policy in hindsight from a prespecified set of policies. From now on, we refer to this policy set as the \emph{baseline policy class}, and denote it by $\Phi^\star$.
\begin{definition}[Regret]\label{def:reg}
  Let $(\mu^k$, $\nu^k)$ denote the policies deployed by the algorithm and the opponent 
  in the $k^{\text{th}}$ episode.
   After a total of $K$ episodes, the regret is defined as
  \begin{equation}\label{eq:reg}
    \Reg_{\Phi^\star}(K) = \max_{\mu\in\Phi^\star}\sum_{k=1}^K (V^{\mu\times\nu^k}_{1} - V^{\mu^k\times \nu^k}_{1}) (s_1).
  \end{equation}
\end{definition}
 When the baseline policy class $\Phi^\star$ includes all the general policies, we will omit subscript $\Phi^\star$ and simply write $\Reg(K)$.

 Compared to previous works \citep[e.g.,][]{brafman2002r,wei2017online,tian2021online,jin2021power} that only pursue achieving the Nash value, i.e., considering the following version of regret
 \begin{equation}\label{eq:pre-reg}        \textstyle
 \sum_{k=1}^K (V^{\star}_{1} - V^{\mu^k\times \nu^k}_{1}) (s_1),
 \end{equation}
our regret defined in \eqref{eq:reg} is a much stronger criterion because it forces the algorithm to exploit the opponents to achieve higher value than Nash equilibria whenever the opponent is exploitable. In stark contrast, the regret defined in \eqref{eq:pre-reg} only requires the algorithm itself to be  invulnerable. Moreover, if the baseline policy class includes all the Markov policies, then the regret defined in \eqref{eq:reg}  is an upper bound for the latter one  because there always exists a Markov Nash equilibrium as mentioned before.

Finally, observe that the regret defined in \eqref{eq:reg} does not depend on  the payoff function of the min-player, so it is still well-defined in the general-sum setting. 
Actually, all the results derived in  this work can be directly extended  to the general-sum setting, although the current paper assumes zero MGs for cleaner presentation and more direct comparison to previous works.


\section{Results for the Standard Setting}
\label{sec:action-hardness}

 In this section, we consider the standard setting where the opponent only reveals her actions to the learner during their interaction.  
We show that achieving low regret  in this setting is impossible in general even if 
(a) the baseline policy class consists of  Markov policies, \emph{and} (b) the opponent sticks to a fixed general policy or only alternates between $H$ different Markov policies. 
Our hardness results build on the generality of Markov games, i.e., the ability to simulate POMDPs and latent MDPs with specially designed opponents. 

\subsection{Against opponents playing a fixed  general policy}

To begin with, we show competing with the best Markov policy in hindsight is  statistically hard when the opponent keeps playing a fixed general  policy. 
\begin{theorem}
	\label{thm:action-general}
	There exists a Markov game with $S,A=\cO(1)$ and an opponent playing a fixed unknown general policy, such that the regret for competing with the best fixed Markov policy in hindsight is  $\Omega(\min\{K, 2^{H}\})$.
\end{theorem}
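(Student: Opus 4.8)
The plan is to exhibit a single, carefully designed Markov game together with a fixed history-dependent opponent policy that turns the learning task into a ``needle-in-a-haystack'' search over an exponentially large secret, so that matching the best Markov policy forces the learner to brute-force this secret. This is precisely the mechanism by which an MG with a general opponent simulates a hard latent MDP (the latent variable being the secret): one may either invoke the known statistical lower bounds for latent MDPs / POMDPs \citep{kwon2021rl,jin2020sample} through a generic reduction, or---as I would do to get a clean constant-size instance---build the hard instance directly.

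Concretely, I would take $|\cA_{\max}|=2$ and $|\cA_{\min}|=2$ (call the min-actions $\star$ and $\perp$), together with a single game state, so that $S,A=\cO(1)$ while the set of deterministic Markov baseline policies is in bijection with $\{0,1\}^{H-1}$ (a Markov policy still gets to pick a different action at each of the $H$ steps). Fix a secret $z\in\{0,1\}^{H-1}$. The opponent's fixed general policy $\nu_z$ plays the constant action $\perp$ (and the reward is $0$) at steps $1,\dots,H-1$, carrying no information. Since both players observe each other's past actions, at step $H$ the opponent has seen the learner's actions $a_{1,\max},\dots,a_{H-1,\max}$; she plays $\star$ iff $a_{h,\max}=z_h$ for all $h\le H-1$ and $\perp$ otherwise, and we set $r_H=\mathbf{1}[a_{H,\min}=\star]$. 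Thus $V^{\mu\times\nu_z}_1(s_1)=\P_\mu[\,a_{1:H-1,\max}=z\,]$; the best fixed Markov policy plays $z$ and attains value $1$ in every episode; and---crucially---every observation the learner receives before its first ``hit'' is identical across all secrets, except that it rules out the particular prefixes already tried.

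With this instance the regret equals the number of episodes in which the learner fails to replay $z$: before the first hit the only information conveyed is ``the guesses tried so far are wrong,'' so against $z$ drawn uniformly from $\{0,1\}^{H-1}$ the problem reduces to finding a uniformly random element of a set of size $N=2^{H-1}$ by testing one candidate per episode. I would then lower-bound the expected first hitting time $T$ by a standard argument: conditioned on the first $k-1$ episodes being misses, at most $k-1$ distinct candidates have been eliminated, so the posterior over $z$ is uniform on a set $R$ with $|R|\ge N-k+1$, and any (possibly randomized) episode-$k$ guess $q$ hits with probability $\sum_{c\in R}q(c)/|R|\le 1/(N-k+1)$; summing over $k\le N/2$ gives $\E_z[\min(T-1,K)]=\Omega(\min\{K,N\})=\Omega(\min\{K,2^{H}\})$. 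Since after the first hit the learner knows $z$ and incurs no further regret, this equals the expected regret, and fixing $z$ to a secret achieving at least the average (a Yao-type step) yields the theorem.

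The step I expect to be the main obstacle is engineering the opponent into a \emph{single} fixed general policy that both (i) verifies the learner's entire action prefix using only its memory of previously observed learner actions, and (ii) leaks nothing about $z$ through intermediate states, opponent actions, or rewards---despite the learner observing the opponent's action after every step---so that the reduction to pure hit/miss feedback is exact. Getting this information-hiding right while keeping $S,A=\cO(1)$ is the delicate part; the bound on the hitting time and the averaging-over-$z$ argument are then routine.
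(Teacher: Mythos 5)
Your proof is correct, but it takes a genuinely different (and more self-contained) route than the paper. The paper proves Theorem~\ref{thm:action-general} by first establishing Proposition~\ref{prop:pomdp} (any POMDP can be simulated by a Markov game of comparable size whose opponent plays a fixed general policy) and then importing the hard ``combination-lock'' POMDP of \citet{jin2020sample}: two hidden states, a uniformly random secret action sequence $a^\star_{1:H-1}$, observations that are indistinguishable until the last step, and reward $1$ only if the entire sequence is reproduced. Your construction cuts out the POMDP middleman: you embed the combination lock directly into a one-state Markov game by letting the history-dependent opponent verify the learner's action prefix against the secret $z$ and release the single reward bit at step $H$. The underlying hard structure---an exponentially large secret with pure hit/miss feedback, followed by a Yao-type averaging over the secret---is the same as in the cited POMDP instance, and your hitting-time bound (the posterior on $z$ stays uniform over the $\ge N-k+1$ surviving candidates, so the hit probability at episode $k$ is at most $1/(N-k+1)$) is the standard argument that also underlies that lower bound; it does need the small additional observation that the per-episode regret is $1-p_k$ rather than an indicator of a miss, but taking conditional expectations of $p_k$ handles this routinely. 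What your route buys is a fully elementary, citation-free proof with an even smaller instance ($S=1$ and two actions per player, versus the $OA+O$ states produced by the reduction), and it nicely mirrors the paper's own one-state construction for Theorem~\ref{thm:general-lower}; what the paper's route buys is the reusable Proposition~\ref{prop:pomdp}, which it also wants for the conceptual message (Figure~\ref{fig:venn}) that Markov games with general opponents subsume POMDPs. Finally, the information-hiding concern you flag at the end is already resolved by your own construction---the opponent's actions are constant before step $H$, so nothing about $z$ leaks through states, actions, or rewards---so it is not an actual obstacle.
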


Theorem \ref{thm:action-general} claims that even in a Markov game of constant size, if the learner is only able to observe the opponent’s actions instead of the opponent’s policies, then there exists a regret lower bound exponential in the horizon length $H$ for competing with the best fixed Markov policy in hindsight when the opponent plays a fixed unknown general policy.

The proof relies on the fact that a POMDP can be simulated by a Markov game of similar size and with an opponent who plays a fixed history-dependent policy. 

\begin{proposition}[POMDP $\subseteq$ MG $+$ opponent playing a general policy]\label{prop:pomdp}
	A POMDP with $S$ hidden states, $A$ actions, $O$ observations, and episode length $H$ can be simulated by a Markov game with opponent playing \textbf{a fixed general policy}, where the Markov game has $OA+O$ states, $A$ actions for the learning agent, $O$ actions for the opponent, and episode length $2H$. 
\end{proposition}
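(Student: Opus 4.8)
The plan is to emulate each step of the POMDP by two consecutive MG steps, using the opponent's history-dependent policy to carry the hidden state that the learner is forbidden to see. Write the POMDP as $(\cX,\cO,\cA,\O,\T,r_{\rm P})$ with $|\cX|=S$, $|\cO|=O$, $|\cA|=A$, emission law $\O(\cdot\,|\,x)$, transition law $\T(\cdot\,|\,x,a)$, and (as holds for the hard instances we invoke) reward $r_{\rm P}:\cO\times\cA\to[0,1]$. I would set the MG state set to $\cS_{\MG}=\cO\cup(\cO\times\cA)$, so $|\cS_{\MG}|=O+OA$; the max-player (learner) action set to $\cA_{\max}=\cA$; and the min-player (opponent) action set to $\cA_{\min}=\cO$. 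The $2H$ steps alternate between emission steps (odd steps $2h-1$) and action steps (even steps $2h$). At an emission step the state is a pair $(o,a)\in\cO\times\cA$ (an arbitrary fixed such pair at step $1$), and the transition ignores the learner's action and sends the game to the observation-state $o'\in\cO$ named by the opponent's action $o'$. At an action step the state is an observation $o\in\cO$, and the transition ignores the opponent's action and sends the game to $(o,a)\in\cO\times\cA$, where $a$ is the learner's action. The reward is $r_{\rm P}(o,a)$ at the action step (state $o$, learner action $a$) and $0$ at emission steps, so that the MG return equals $\sum_{h=1}^{H} r_{\rm P}(o_h,a_h)$.

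Next I would specify the fixed opponent policy $\nu$ to be an internal simulation of the POMDP: the opponent privately maintains a hidden state $x_h$; at emission step $2h-1$ she samples $o_h\sim\O(\cdot\,|\,x_h)$ and plays it (driving the game to state $o_h$); after observing the learner's revealed action $a_h$ at action step $2h$, she samples $x_{h+1}\sim\T(\cdot\,|\,x_h,a_h)$; and at step $1$ she first draws $x_1$ from the POMDP initial distribution. This is a legitimate fixed general policy, measurable with respect to the history (including the learner's past actions, which are revealed each step) and identical for every learner algorithm. The crucial correspondence is that the learner's observable information when choosing $a_h$ is exactly the observation-action history $(o_1,a_1,\dots,o_{h-1},a_{h-1},o_h)$: the emission-step opponent actions reveal only the $o$'s (which are the current states anyway), the action-step opponent action is a constant dummy, and the post-action states $(o,a)$ reveal only quantities the learner already knows. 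Hence learner general policies in the MG are in bijection with history-dependent POMDP policies $\pi$, with the inert dummy choices at emission steps carried along harmlessly.

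The heart of the argument, and the step I expect to be the main obstacle, is establishing the exact distributional equivalence: for every learner policy $\mu$ (equivalently $\pi$), the joint law of $(o_1,a_1,\dots,o_H,a_H)$ generated by $(\mu,\nu)$ in the MG coincides with the law generated by $\pi$ in the POMDP. I would prove this by induction on $h$, coupling the opponent's private $x_h$ with the POMDP hidden state and verifying at each emission step that $o_h\sim\O(\cdot\,|\,x_h)$ and at each action step that $x_{h+1}\sim\T(\cdot\,|\,x_h,a_h)$, which reproduces the POMDP dynamics one step at a time. The delicate point requiring care is the information accounting: I must verify that the learner's $\sigma$-algebra in the MG equals the observation-action $\sigma$-algebra of the POMDP, so that conditioning on MG histories and on POMDP histories agree and the opponent leaks nothing about $x_h$. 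Once the trajectory laws match, the reward identity yields $V^{\mu\times\nu}_1(s_1)=\E_{\pi}\!\left[\sum_{h=1}^{H} r_{\rm P}(o_h,a_h)\right]$, i.e. exactly the POMDP value of $\pi$, which transfers regret lower bounds from the POMDP to the constructed MG and completes the proof.
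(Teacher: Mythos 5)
Your proposal is correct and follows essentially the same construction as the paper: two MG steps per POMDP step, states $\cO\cup(\cO\times\cA)$, the learner's action recorded by augmenting the state, and the opponent's action dictating the next observation-state. The only cosmetic differences are the parity of the learner/opponent steps and that you describe the opponent as privately simulating the hidden state $x_h$, whereas the paper directly takes the equivalent behavioral policy that samples $o_{h+1}$ from $\P(o_{h+1}=\cdot\mid o_1,a_1,\ldots,o_h,a_h)$ (the marginalization of your opponent over its private state), which is what makes it literally a general policy in the paper's sense.
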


The idea of simulating a POMDP is demonstrated in Figure~\ref{fig:pomdp}: the opponent dictates the next state every two time steps, and since the opponent knows the full trajectory $\{o_1,a_1,o_2,\cdots,o_h,a_h\}$, she can choose $b_h$ according to the conditional distribution of $o_{h+1}$ given $\{o_1,a_1,o_2,\cdots,o_h,a_h\}$ in the POMDP. 
Thus we can simulate the POMDP with a Markov game whose number of states and actions are polynomially related to the original POMDP. We remark that in POMDPs the reward is typically included in the observation so here we do not need to handle it separately. A detailed proof of Proposition \ref{prop:pomdp} is provided in Appendix \ref{app:pomdp}.

\begin{figure}[t]
	\centering
	\includegraphics[width=0.6\columnwidth]{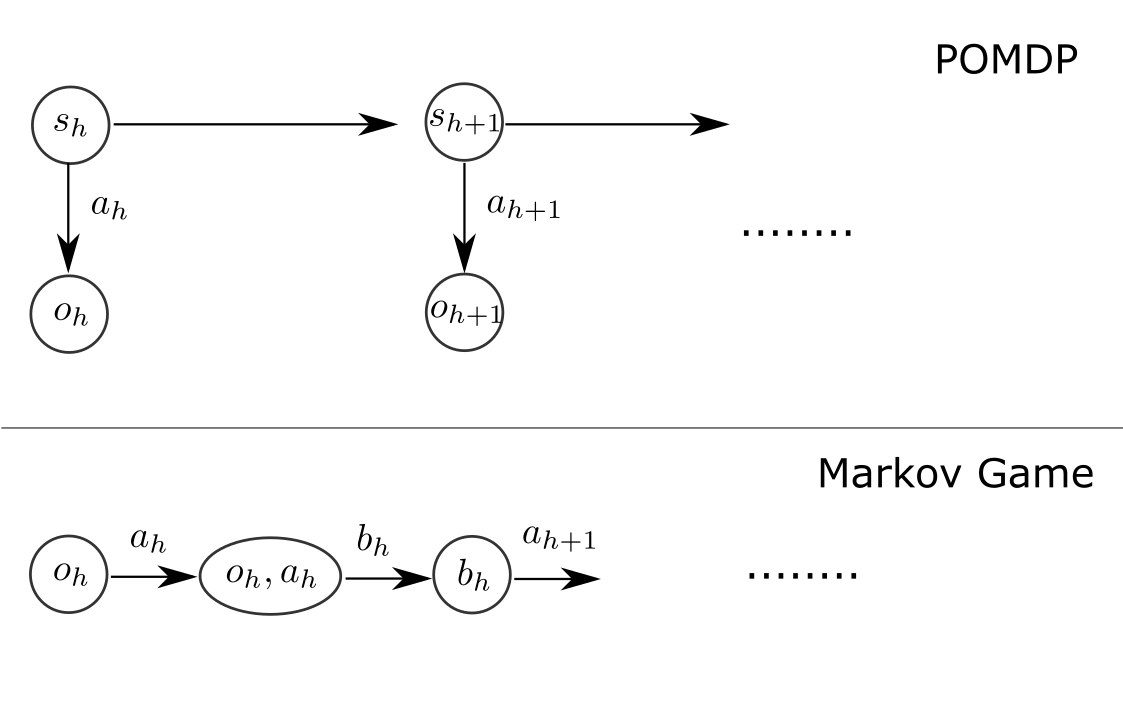}\vspace{-6mm}
	\caption{Simulating a POMDP using a Markov game with a history-dependent opponent.
	The player and the opponent dictates the transition dynamics in turn.  
	 The opponent, which has access to the full history $\{o_1,a_1,o_2,\cdots,o_h,a_h\}$, can always sample her action $b_h$ from $\P[o_{h+1}=\cdot|o_1,a_1,o_2,\cdots,o_h,a_h]$ in the POMDP above, and the next state is exactly equal to $b_h$.}
	\label{fig:pomdp}
\end{figure}

Given that there exists exponential regret lower bound for learning POMDPs \citep[e.g.,][]{jin2020sample}, Proposition \ref{prop:pomdp} immediately implies  no-regret learning in Markov games is  in general intractable if the opponent plays a fixed general policy. 
The proof is a straightforward combination of Proposition \ref{prop:pomdp} and the hard instance constructed in \citet{jin2020sample}, which can be found in Appendix \ref{app:general}.

\subsection{Against opponents playing Markov policies}

Theorem~\ref{thm:action-general} shows that it is statistically hard to compete with the best Markov response to a non-Markov opponent, which is in stark contrast to the case where the opponent plays a fixed Markov policy and the Markov game can be reduced to a single-agent MDP. However, when the opponent is able to choose from a small set of Markov policies, the task of competing with the best Markov policy in hindsight becomes intractable again.

\begin{theorem}
	\label{thm:action-markov}
		There exists a Markov game with $S,A=\cO(H)$ and an opponent who  chooses policy uniformly at random from an unknown set of $H$ Markov policies in each episode, such that  the regret for competing with the best fixed Markov policy in hindsight is  $\Omega(\min\{K, 2^H\}/H)$.
	\end{theorem}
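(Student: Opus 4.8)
The plan is to mirror the strategy behind Theorem~\ref{thm:action-general}, but to replace the POMDP reduction of Proposition~\ref{prop:pomdp} with an analogous reduction from \emph{latent MDPs}, exploiting the fact that the latent variable can be encoded as the identity of the opponent's randomly drawn Markov policy. Concretely, I would first establish a proposition of the form ``latent MDP $\subseteq$ MG $+$ opponent playing a randomly drawn Markov policy'': any latent MDP with $M$ models, $S_L$ states, $A_L$ actions, and horizon $H$, whose latent variable is sampled uniformly at the start of each episode, can be simulated by a Markov game in which the opponent draws a Markov policy uniformly from a fixed set $\{\nu^{(m)}\}_{m\in[M]}$. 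As in Figure~\ref{fig:pomdp}, I would use a two-step gadget per latent step, so that the MG has $\cO(S_L A_L + S_L)$ states, $A_L$ learner actions, $\cO(S_L)$ opponent actions, and horizon $2H$; instantiating with $S_L,A_L,M=\cO(H)$ then matches the size claim $S,A=\cO(H)$.

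The key point---and the reason a Markov opponent now suffices where Proposition~\ref{prop:pomdp} required a history-dependent one---is that within any \emph{fixed} model the latent-MDP dynamics are Markov. At odd MG steps the learner plays the latent action $a_h$, moving the game deterministically to an intermediate state encoding the pair $(x_h,a_h)$; at the following even step the opponent, whose Markov policy may condition on this intermediate state, samples the next latent state $x_{h+1}\sim \P^{(m)}_h(\cdot\mid x_h,a_h)$ simply by choosing it as her action. Since $\P^{(m)}_h(\cdot\mid x_h,a_h)$ depends only on the current intermediate state (and the index $m$, which is fixed for the whole episode and baked into which $\nu^{(m)}$ was drawn), no dependence on history is required. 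Folding rewards into transitions in the usual way makes the MG reward a fixed function of the intermediate state and the opponent's action, so that $V^{\mu\times\nu^{(m)}}_1(s_1)$ equals the value of the Markov policy $\mu$ in model $m$ of the latent MDP.

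With the reduction in hand I would relate the two objectives. Writing $m_k$ for the i.i.d.\ uniform latent index drawn in episode $k$, Definition~\ref{def:reg} gives $\Reg_{\Phi^\star}(K)=\max_{\mu\in\Phi^\star}\sum_k (V^{\mu\times\nu^{(m_k)}}_1 - V^{\mu^k\times\nu^{(m_k)}}_1)(s_1)$, and a standard concentration argument shows $\sum_k V^{\mu\times\nu^{(m_k)}}_1$ is within $\tilde{\cO}(\sqrt{K})$ (up to $H$ factors) of $K$ times the model-averaged value of $\mu$. Hence sublinear regret against the best fixed Markov policy would force the learner's policies $\{\mu^k\}$ to attain, on average, the value of the best model-averaged Markov policy---equivalently, to solve the latent MDP---using only the observed opponent actions, i.e.\ the latent-MDP trajectory, and without ever seeing $m_k$. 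I would then instantiate the statistical lower bound for latent MDPs of \citet{kwon2021rl} with $M=H$ models of size $\cO(H)$, chosen so that the optimal value is attained by a Markov policy yet any algorithm needs $\Omega(2^H)$ episodes to approach it; combined with a per-episode value gap of $\Theta(1/H)$ in the hard instance, this yields the claimed $\Omega(\min\{K,2^H\}/H)$ bound.

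The main obstacle I anticipate is the verification in the second paragraph that a Markov opponent is genuinely sufficient: one must check that every piece of information needed to reproduce the model-$m$ transition and reward is present in the current MG state, so that the encoding never leaks across the history (which would force a general policy) and never reveals the identity of $m$ to the learner through the observable actions. A secondary, more bookkeeping-level difficulty is importing the latent-MDP lower bound with the correct comparator---best \emph{Markov} rather than best general policy---and tracking constants so that the exponential hardness survives the $2H$ blow-up in horizon and produces exactly the stated $\Omega(\min\{K,2^H\}/H)$ rate.
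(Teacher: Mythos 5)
Your proposal follows essentially the same route as the paper: it establishes the reduction "latent MDP $\subseteq$ MG $+$ opponent drawing from a finite set of Markov policies" via the same two-step gadget (this is exactly Proposition~\ref{prop:latent}), and then combines it with the hard latent-MDP instance of \citet{kwon2021rl} together with an online-to-batch conversion to obtain the $\Omega(\min\{K,2^H\}/H)$ bound. The approach and key steps match the paper's proof in Appendices~\ref{app:latent} and~\ref{app:Markov}, up to minor bookkeeping in the instance sizes.
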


Theorem \ref{thm:action-markov} claims that even restricting the opponent to only play a finite number of Markov policies is insufficient to circumvent the exponential regret lower bound for competing with the best Markov policy in hindsight, as long as the opponent only reveals her actions to the learner.

The proof of Theorem \ref{thm:action-markov} utilizes the following fact that 
we can simulate a latent MDP  by a Markov game of similar size and an opponent who only plays a small set of Markov policies. 

\begin{proposition}[Latent MDP $\subseteq$ MG $+$ opponent playing multiple Markov policies]\label{prop:latent}
	A latent MDP with $L$ latent variables, $S$ states, $A$ actions, and episode length $H$ and binary rewards can be simulated by a Markov game with opponent playing policies chosen from a set of $L$ Markov policies, where the Markov game has $SA+S$ states, $A$ actions for the learning agent, $2S$ actions for the opponent, and episode length $2H$.
\end{proposition}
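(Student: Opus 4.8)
The plan is to adapt the POMDP template of Proposition~\ref{prop:pomdp} to the latent-MDP setting, the one genuinely new ingredient being the way a latent-dependent binary reward is transmitted. As in the POMDP construction I would use a doubled horizon $2H$ to implement turn-taking: the $S$ ``agent-move'' states correspond to the states of the latent MDP, and the $SA$ ``opponent-move'' states correspond to state--action pairs $(s,a)$, giving $SA+S$ states. At an agent-move state $s$ (an odd step $2h-1$) the agent's action $a\in\cA$ determines a deterministic transition to the opponent-move state $(s,a)$ and the opponent's action is ignored; at an opponent-move state $(s,a)$ (an even step $2h$) the agent's action is ignored and the opponent's action drives the transition. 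This accounts for the $A$ agent actions (their values at opponent-move states being irrelevant) and the fixed initial agent-move state $s_1$.

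The central departure from the POMDP case is that the latent variable is fixed throughout an episode and, conditioned on it, the dynamics are Markovian; hence the opponent can encode each of the $L$ models by a single \emph{Markov} policy $\nu_\ell$ rather than a history-dependent one. Concretely, at the opponent-move state $(s,a)$ at step $2h$ the opponent playing $\nu_\ell$ samples the next agent-move state $s'\sim\P^{(\ell)}_h(\cdot\mid s,a)$ and simultaneously emits the binary reward $r^{(\ell)}_h(s,a)$; I would let her action range over the $2S$ pairs $(s',\text{bit})\in[S]\times\{0,1\}$ and define the (latent-independent) MG reward at step $2h$ to read off exactly the bit component of her action, with zero reward at agent-move steps. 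At the start of each episode the opponent draws $\ell$ from the latent MDP's fixed mixing distribution and commits to $\nu_\ell$, reproducing the law of the latent variable. This is precisely what forces the $2S$ opponent actions and the binary-reward assumption.

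Having fixed the construction, I would verify the simulation through a value-preserving correspondence between policies. A Markov policy $\mu=\{\mu_h\}$ of the latent MDP maps to the MG Markov policy $\tilde\mu$ that plays $\mu_h$ at agent-move states of step $2h-1$ and plays arbitrarily at opponent-move states; this map is injective and value-preserving, and conversely every MG policy agrees in value with the latent-MDP policy given by its restriction to agent-move states, which is all that is needed to identify the two regret notions against the best fixed Markov policy. Conditioned on $\ell$, I would show by induction on $h$ that the distribution of the observed sequence $(s_1,a_1,s_2,a_2,\dots)$ together with the collected rewards is identical under $(\mu,\text{MDP}^{(\ell)})$ and under $(\tilde\mu,\nu_\ell)$ in the MG; averaging over $\ell$ then yields $V^{\tilde\mu\times\nu}_{1}=V^{\mu}_{\text{latent}}$ for matched policies and hence $\Reg_{\text{MG}}=\Reg_{\text{latent}}$.

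The step I expect to require the most care is the information-leakage check rather than the dynamics bookkeeping. Because the rules of the MG let the agent observe the opponent's action, the learner sees the full $2S$-valued action $(s',\text{bit})$ at every even step; I would argue that this reveals precisely the next state and the instantaneous reward---exactly the feedback already available in a latent MDP---and in particular does not disclose $\ell$, so the learner's information is identical in the two models. Consequently the known statistical hardness of latent MDPs \citep{kwon2021rl} transfers verbatim, which is what drives Theorem~\ref{thm:action-markov}. The remaining ingredients (counting states and actions, the deterministic first half-step, and matching the initial state) are routine.
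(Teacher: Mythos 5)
Your construction coincides with the paper's own proof: the same doubled horizon with augmented states $(s,a)$, the same $2S$ opponent actions encoding the pair (next state, reward bit), the same per-episode draw of the latent index committing the opponent to one of $L$ Markov policies, and the same observation that the opponent's action reveals nothing beyond the next state and reward. The proposal is correct and essentially identical in approach.
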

The proof of Proposition \ref{prop:latent} is deferred to Appendix \ref{app:latent}, which is in a  similar spirit to Proposition \ref{prop:pomdp}. 
Proposition~\ref{prop:pomdp} and \ref{prop:latent}   can be alternatively characterized by the Venn diagram in Figure~\ref{fig:venn}.

Combining Proposition \ref{prop:latent}  with the hardness instance for learning latent MDPs \citep{kwon2021rl}  immediately implies the  exponential lower bound for playing against Markov opponents in Theorem \ref{thm:action-markov}.
A detailed proof is provided in Appendix \ref{app:Markov}.

\begin{figure}[t]
	\centering
	\includegraphics[width=0.6\columnwidth]{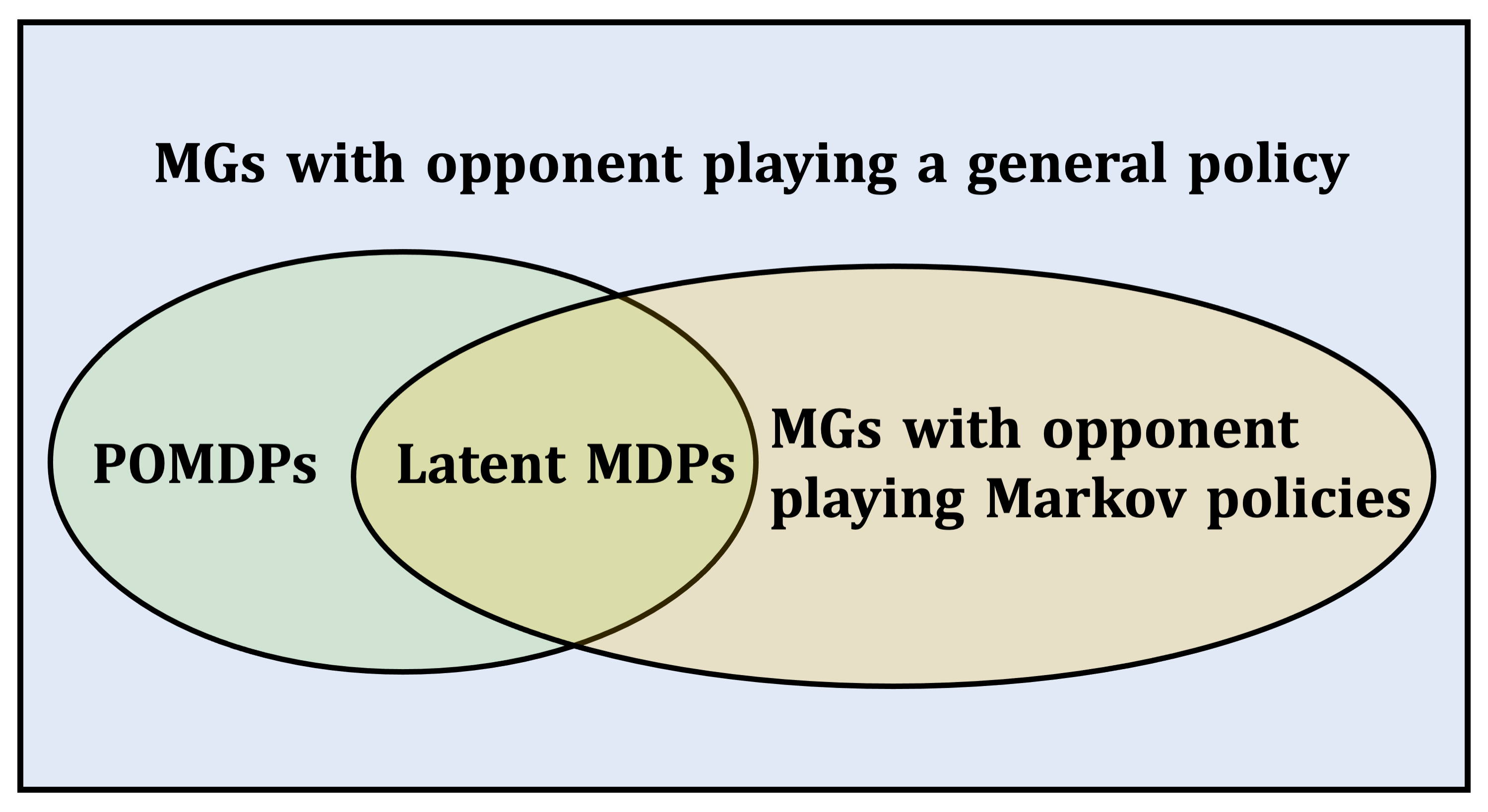}
	\caption{Relation between Markov games (reveal action only), latent MDPs and POMDPs.}
	\label{fig:venn}
\end{figure}


\section{Results for the Revealed-policy Setting}
\label{sec:reveal-policy}

In this section, we study the setting where the opponent reveals the policy she just played to the learner at the end of each episode. 
Formally, in each round of interaction: first the learner and the opponent choose their policies $\mu$ and $\nu$  simultaneously, then an episode is played following $\mu\times\nu$, and after that the learner gets to observe the opponent policy $\nu$.  For this setting, we propose two algorithms with $\sqrt{K}$-regret upper bounds, when either the log-cardinality of the baseline policy class  or the cardinality of the opponent’s policy class is small. This is complemented with an exponential lower bound when neither conditions are true.

\subsection{Finite baseline policy class $\Phi^\star$ }
\label{subsec:finite-baseline}

We first consider the case when the baseline policy class  $\Phi^\star$ to compete with is finite but the opponent's policy class is arbitrary. 
Importantly, we allow both the opponent's polices and the baseline polices to be non-Markov (history-dependent).

\paragraph{Algorithm.} We propose  \OPMD~(Algorithm \ref{alg:md}), which represents \textbf{O}ptimistic \textbf{P}olicy  \textbf{EXP3}, for no-regret learning in this setting.
At a high level, \OPMD~performs any-time EXP3 with optimistic gradient estimate in the baseline policy class  $\Phi^\star$  by viewing each baseline policy as an``action".  Specifically, \OPMD~ maintains a distribution  $\bp$ over the baseline policy class, and in each episode $k$
\begin{itemize}
    \item \textbf{Interaction} (Line \ref{line:1-1}-\ref{line:1-2}). The learner samples a policy $\mu^k$ from  $\Phi^\star$ according to $\bp^k$ and the opponent chooses her policy $\nu^k$ simultaneously. 
    Then a trajectory is  sampled by following  $\mu^k\times\nu^k$. 
    \item \textbf{Optimistic EXP3} (Line \ref{line:1-3}-\ref{line:1-4}). The opponent's policy $\nu^k$ is revealed to the learner, and for every baseline policy $\mu$ in  $\Phi^\star$, the learner computes an optimistic estimate of the value function of $\mu\times\nu^k$ by  using the \textbf{O}ptimistic \textbf{P}olicy \textbf{E}valuation (OPE) subroutine. Then the EXP3 update is incurred with the optimistic value estimates as the negative gradient.  
    \item  \textbf{Model estimate update} (Line \ref{line:1-5}). Using the newly collected data, we update the empirical estimate of the MG model.
\end{itemize} 
In Subroutine \ref{alg:vi}, we formally describe the optimistic  policy evaluation step. In brief, it utilizes the Bellman equation for general policies to perform dynamic programming from step $H$ to step $1$, by using the empirical transition and additionally adding bonus to ensure optimism.

\begin{algorithm}[h!]
    \caption{\textsl{\textbf{O}ptimistic \textbf{P}olicy \textbf{EXP3}}}
 \begin{algorithmic}[1]\label{alg:md}
    \STATE \textbf{input}:  bonus function $\beta:\N\rightarrow\R$, learning rate $(\eta_k)_{k=1}^K$, basesline policy class  $\Phi^\star$ 
    \STATE \textbf{initialize}: initial distribution $\bp^1\in\R^{|\Phi|}$ to be uniform over $\Phi$, visitation counters $N_h(s,\a)=N_h(s,\a,s')=0$ for all $(s,\a,\a',h)$
    \FOR{$k=1,\ldots,K$}
\STATE \alglinelabel{line:1-1} the learner samples $\mu^k\sim\bp^k$ and the adversary chooses $\nu^k$ \emph{simultaneously}
\STATE \alglinelabel{line:1-2} follow $\pi^k=\mu^k\times\nu^k$ to sample  $\{s_h^k,\a_h^k,r_h^k\}_{h=1}^H$\\
  \textsl{{\color{blue} \# optimistic EXP3 }}
\STATE  observe $\nu^k$, and for all $\mu\in\Phi^\star$ compute \alglinelabel{line:1-3}
$\up{V}^{\mu\times\nu^k}_1(s_1)=\text{OPE}(N,\beta,\mu\times\nu^k)$
\STATE then update 
$\bp^{k+1}(\mu)\propto  \exp(\eta_k \cdot\sum_{t=1}^k  \cdot \up{V}^{\mu\times\nu^t}_1(s_1) )$\alglinelabel{line:1-4}\\
\textsl{ {\color{blue} \# update the counters}}
\STATE  for all $h\in[H]$: $N_h(s_h^k,\a_h^k)\leftarrow N_h(s_h^k,\a_h^k)+1$ and $N_h(s_h^k,\a_h^k,s_{h+1}^k) \leftarrow N_h(s_h^k,\a_h^k,s_{h+1}^k)+1$ \alglinelabel{line:1-5}
    \ENDFOR
 \end{algorithmic}
 \end{algorithm}

 \begin{subroutine}[h!]
     \caption{\textsl{\textbf{O}ptimistic  \textbf{P}olicy \textbf{E}valuation~$(N,\beta,\pi)$}}
     \begin{algorithmic}\label{alg:vi}
         \STATE initialize $V_{H+1}(\tau_{H+1})= 0$ for all $\tau_{H+1}$
\FOR{$(s,\a,h,s')\in\cS\times\cA\times[H]\times\cS$}
\STATE 
 $$
\widehat\P_h(s'\mid s,\a)=\begin{cases}
{N_{h}(s,\a,s')}/{ N_h(s,\a)}, &\text{ if } N_h(s,\a)\neq0\\
    {1}/{S}, & \qquad \text{otherwise}
 \end{cases}
 $$ 
\ENDFOR         
\FOR{$h=H,\ldots,1$}
         \FOR{all $\tau_h=(s_1,\a_1,\ldots,s_h)\in(\cS\times\cA)^{h-1}\times\cS$}
         \STATE $Q_{h}(\tau_h,\a) = \E_{s'\sim\widehat\P_h(\cdot\mid s_h,\a)}\left[V_{h+1}([\tau_h,\a,s'])\right]+r_h(s_h,\a)+\beta(N_{h}(s_h,\a))$
         \STATE $Q_{h}(\tau_h,\a) =\min\left\{Q_{h}(\tau_h,\a),H-h+1\right\}$
         \STATE $V_h(\tau_h)= \E_{\a\sim\pi(\cdot\mid \tau_h)} [Q_{h}(\tau_h,\a)]$
         \ENDFOR
         \ENDFOR
\STATE \textbf{return} $V_1(s_1)$
     \end{algorithmic}
 \end{subroutine}

 \paragraph{Theoretical guarantee.}
 Below we present the main theoretical guarantee for \OPMD.
 \begin{theorem}\label{thm:up-1}
Let $c$ be a large absolute constant. 
In Algorithm \ref{alg:md}, choose 
$       
\eta_k=\sqrt{{\log| \Phi^\star|}/{(kH^2)}}$ and   $\beta(n)=\sqrt{{H^2S\iota}/{\max\{n,1\}}}$ 
where 
$\iota=c\log(SAHK/\delta)$. Then with probability at least $1-\delta$, for all $k\in[K]$
$$
\Reg_{\Phi^\star}(k)\le \cO\left( \sqrt{kH^2\log|\Phi^\star|}+\sqrt{kS^2AH^4\iota^2}\right).
$$
\end{theorem}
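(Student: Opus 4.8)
The plan is to decompose the regret against the best comparator $\mu\in\Phi^\star$ into four pieces: a sampling martingale, an optimism term, a full-information exponential-weights (Hedge) regret measured on the optimistic estimates, and an estimation-error term that I ultimately pay for through the collected data. Writing $\bp^t$ for the distribution maintained in episode $t$, and recalling that $\mu^t\sim\bp^t$ while $\nu^t$ is chosen simultaneously (hence measurable with respect to the filtration $\cF_{t-1}$ relative to the current trajectory randomness), I first replace the realized value $V^{\mu^t\times\nu^t}_1(s_1)$ by its conditional expectation $\E_{\tilde\mu\sim\bp^t}[V^{\tilde\mu\times\nu^t}_1(s_1)]$; the difference is a bounded martingale controlled by Azuma at scale $\cO(H\sqrt{k\iota})$. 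This reduces the target to bounding $\sum_{t=1}^k\big(V^{\mu\times\nu^t}_1-\E_{\tilde\mu\sim\bp^t}[V^{\tilde\mu\times\nu^t}_1]\big)(s_1)$.

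The second step is to establish optimism: with probability $1-\delta$, simultaneously for all $k$, all $\mu\in\Phi^\star$ and all $t\le k$, the OPE output of Subroutine~\ref{alg:vi} satisfies $\up{V}^{\mu\times\nu^t}_1(s_1)\ge V^{\mu\times\nu^t}_1(s_1)$. I would prove this by backward induction along the trajectory-indexed Bellman recursion, using the uniform transition concentration $\|(\widehat\P_h-\P_h)(\cdot\mid s,\a)\|_1\le\sqrt{S\iota/\max\{N_h(s,\a),1\}}$ (union bound over the $SAH$ triples, the source of $\iota=c\log(SAHK/\delta)$). Since at each step the transition depends only on $(s_h,\a_h)$, the inner product $(\widehat\P_h-\P_h)^{\top}\up{V}_{h+1}$ is bounded by $H\sqrt{S\iota/N_h}=\beta(N_h)$ even though $\up{V}_{h+1}$ is history-dependent, so the bonus exactly compensates the model error and the truncation at $H-h+1$ preserves the inequality. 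Optimism then lets me replace the comparator value by its optimistic estimate, $V^{\mu\times\nu^t}_1\le\up{V}^{\mu\times\nu^t}_1$.

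With optimism in hand, I add and subtract $\sum_t\E_{\tilde\mu\sim\bp^t}[\up{V}^{\tilde\mu\times\nu^t}_1]$ to split the remaining quantity into (i) the Hedge regret $\sum_t\up{V}^{\mu\times\nu^t}_1-\sum_t\E_{\tilde\mu\sim\bp^t}[\up{V}^{\tilde\mu\times\nu^t}_1]$ of exponential weights run on the gains $\{\up{V}^{\mu\times\nu^t}_1(s_1)\}_{\mu\in\Phi^\star}\subset[0,H]$, and (ii) the expected optimism gap $\sum_t\E_{\tilde\mu\sim\bp^t}[\up{V}^{\tilde\mu\times\nu^t}_1-V^{\tilde\mu\times\nu^t}_1]$. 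For (i), the standard time-varying-learning-rate analysis with $\eta_k=\sqrt{\log|\Phi^\star|/(kH^2)}$ and gains bounded by $H$ yields the first term $\cO(\sqrt{kH^2\log|\Phi^\star|})$. For (ii), the value-difference (simulation) lemma bounds each summand by $2\,\E^{\tilde\mu\times\nu^t}_\P[\sum_{h}\beta(N^t_h(s_h,\a_h))]$, again using the uniform $\ell_1$ bound so that no covering of the history-dependent value functions is needed.

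The crux---and the step I expect to be the main obstacle---is converting this expected bonus over the \emph{whole} policy distribution into a bonus accumulated along the \emph{single} trajectory actually played, which is precisely what lets a full-information update survive single-trajectory exploration. The key observation is that the episode-$t$ trajectory is generated by exactly the law ``draw $\tilde\mu\sim\bp^t$, then roll out $\tilde\mu\times\nu^t$ under $\P$,'' so conditioned on $\cF_{t-1}$ the expected realized bonus $\E[\sum_h\beta(N^t_h(s^t_h,\a^t_h))\mid\cF_{t-1}]$ equals $\E_{\tilde\mu\sim\bp^t}\E^{\tilde\mu\times\nu^t}_\P[\sum_h\beta(N^t_h(s_h,\a_h))]$. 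A second Azuma bound (increments at scale $H\sqrt{H^2S\iota}$, hence lower order) reduces (ii) to the realized cumulative bonus $\sum_{t=1}^k\sum_{h=1}^H\beta(N^t_h(s^t_h,\a^t_h))$, which the usual pigeonhole/counting argument (Cauchy--Schwarz over the $SA$ pairs at each of the $H$ steps) bounds by the claimed second term $\cO(\sqrt{kS^2AH^4\iota^2})$. Collecting the two Azuma contributions as lower order and taking a union bound over $k\in[K]$ completes the proof; the delicate points are making every high-probability event uniform in $k$ for the anytime claim, and checking that the history-dependence of general-policy value functions breaks neither the optimism induction nor the simulation lemma.
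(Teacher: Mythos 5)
Your proposal is correct and follows essentially the same route as the paper's proof: the same decomposition into anytime-EXP3 regret on the optimistic values, an optimism lemma proved by backward induction from the uniform $\ell_1$ transition concentration (with the bonus absorbing the model error against the history-dependent $\up{V}_{h+1}$ via H\"older), an Azuma step converting the expected bonus under $\tilde\mu\sim\bp^t$ into the realized per-trajectory bonus, and a pigeonhole count giving the $\sqrt{kS^2AH^4\iota^2}$ term. The only cosmetic difference is that the paper packages the optimism gap as an auxiliary value function $\tilde V$ with doubled bonus rather than invoking a simulation lemma by name.
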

Theorem \ref{thm:up-1} claims that \OPMD~with standard UCB-bonus achieves $\cO(\sqrt{k})$-regret with high probability, when competing with the best policy in hindsight in the baseline class.
Notably, the regret  only  depends logarithmically on the  cardinality of the baseline class and is  independent of the opponent's policy class. In particular, if we choose the baseline policy class to be the collections of all deterministic\footnote{Competing against all Markov policies is equivalent to competing with all deterministic Markov policies because for any general policy there always exists a Markov best-response that is also deterministic.} Markov policies ($|\Phi^\star|=A^{SH}$), then Theorem \ref{thm:up-1} immediately implies  $\cO\left( \sqrt{kS^2AH^4\iota^2}\right)$ regret upper bound for competing with the best Markov policy in hindsight.
Moreover, it further implies the same regret upper bound for competing against the value of Nash equilibria, i.e., the regret in equation \eqref{eq:pre-reg}, because there always  exists a Markov Nash equilibrium. The proof of Theorem \ref{thm:up-1} can be found in Appendix \ref{pf:up-1}.

\subsection{Finite unknown opponent policy class $\Psi^\star$}
In Section \ref{subsec:finite-baseline}, we study the problem of competing with a finite baseline policy while allowing arbitrary opponent polices. 
In this subsection, we turn to a complementary setting where the baseline policy class consists of \emph{all} the general polices while the opponent policy class $\Psi^\star$ is finite but \emph{unknown}.

\paragraph{Algorithm.} Based on \OPMD, we propose \textsf{Adaptive OP-EXP3} (Algorithm \ref{alg:md-restart}), which  represents \textbf{adaptive} \textbf{O}ptimistic \textbf{P}olicy \textbf{EXP3}. Compared to its prototype, \AOPMD~incorporates the following two key modifications
\begin{itemize}
    \item \textbf{Lazy model update} (Line \ref{line:2-6}-\ref{line:2-7}). \textsf{Adaptive OP-EXP3} maintains two empirical model estimates: the latest version and a lazy version that are computed by using counter $N$ and $\NL$ respectively. Counter $N$ is  promptly updated in each episode as in  \OPMD, while counter $\NL$ copies the values in $N$ each time a state-action counter in $N$ is doubled or a new opponent policy is observed. Importantly, \AOPMD~always uses the lazy model estimate for optimistic policy evaluation (Line 6).
    \item \textbf{Adaptive player policy class} (Line \ref{line:2-6}-\ref{line:2-9}). Each time the opponent reveals a new policy (i.e., a policy not in the historical opponent policy set $\Psi^k$) or the lazy model is updated, the learner recomputes its policy class $\Phi$ to include the optimistic best responses to all the possible mixtures of historical opponent policies. After that, EXP3 is restarted from the uniform distribution over $\Phi$.
\end{itemize}
We formally describe how to recompute the player policy class in Subroutine \ref{alg:br} where we in fact only consider an $\epsilon$-cover of all the possible mixtures of historical opponent policies. And for each such mixture, we compute an optimistic best response, by invoking the optimistic policy evaluation subroutine with the lazy model estimate.

Intuitively, the reason for only including the best responses to policy mixtures in the player policy class   is that the best general policy in hindsight is always a best response to a mixture of the historical opponent policies.
Moreover, by doing so, we effectively shrink the log-cardinality of the baseline policy class to $\tilde\cO(|\Psi^\star|)$ that is the size of the opponent policy class, while still remaining competitive with any general policy.

\begin{algorithm}[h]
    \caption{\textsl{\textbf{Adaptive} \textbf{O}ptimistic \textbf{P}olicy \textbf{EXP3}}}
 \begin{algorithmic}[1]\label{alg:md-restart}
    \STATE \textbf{input}:   bonus function $\beta:\N\rightarrow\R$, learning rate $(\eta_k)_{k=1}^K$, grid resolution $\epsilon$.
    \STATE \textbf{initialize}: baseline policy class $\Phi$ and distribution $\bp^1\in\R^{|\Phi|}$ arbitrarily, visitation counters $N_h(s,\a)=N_h(s,\a,s')=\NL_h(s,\a)=\NL_h(s,\a,s')=0$ for all $(s,\a,\a',h)$, $\Psi^1=\emptyset$, $m^1=0$
    \FOR{$k=1,\ldots,K$}
\STATE the learner samples $\mu^k\sim\bp^k$ and the adversary chooses $\nu^k$ \emph{simultaneously} \alglinelabel{line:2-1}
\STATE follow $\pi^k=\mu^k\times\nu^k$ to sample  $\{s_h^k,\a_h^k,r_h^k\}_{h=1}^H$\alglinelabel{line:2-2}\\
\textsl{{\color{blue} \# optimistic EXP3 }}
\STATE observe $\nu^k$, and for all $\mu\in\Phi$ compute  \alglinelabel{line:2-3} 
$\up{V}^{\mu\times\nu^k}_1(s_1)=\text{OPE}(\NL,\beta,\mu\times\nu^k)$
 \STATE then update  \alglinelabel{line:2-4} 
$\bp^{k+1}(\mu)\propto  \exp(\eta_{k} \cdot  \sum_{t=m^k+1}^k \up{V}^{\mu\times\nu^t}_1(s_1 ))$\\
 \textsl{ {\color{blue}\# update the counters}}
\STATE  for all $h\in[H]$: $N_h(s_h^k,\a_h^k)\leftarrow N_h(s_h^k,\a_h^k)+1$ and $N_h(s_h^k,\a_h^k,s_{h+1}^k) \leftarrow N_h(s_h^k,\a_h^k,s_{h+1}^k)+1$ \alglinelabel{line:2-5}\\
 \textsl{{\color{blue} \# update the lazy model and policy class }}
\IF{$\nu^k\notin \Psi^k$\textbf{or} $\exists h$ s.t. $N_h(s_h^k,\a_h^k) \ge 2\NL_h(s_h^k,\a_h^k)$}\alglinelabel{line:2-6}
\STATE $\NL\leftarrow N$, $\Psi^{k+1} \leftarrow \Psi^k \cup\{\nu^k\}$, $m^{k+1}\leftarrow k$\alglinelabel{line:2-7}
\STATE $\Phi\leftarrow\text{OBR}(\NL,\beta,\Psi^{k+1},\epsilon)$\alglinelabel{line:2-8}
\STATE reset $\bp^{k+1}$ to be uniform over $\Phi$ \alglinelabel{line:2-9}
\ELSE
\STATE {$\Psi^{k+1} \leftarrow \Psi^k $ and $m^{k+1}\leftarrow m^k$ }\alglinelabel{line:2-10}
\ENDIF
    \ENDFOR
 \end{algorithmic}
 \end{algorithm}

\begin{subroutine}[h]
    \caption{\textsl{\textbf{O}ptimistic \textbf{B}est \textbf{R}esponse $(\NL,\beta,\Psi,\epsilon)$}}
    \begin{algorithmic}\label{alg:br}
       \STATE \textbf{initialize:} $\BR=\{\}$
        \STATE denote the polices in $\Psi$ by $\nu^{(1)},\ldots,\nu^{(|\Psi|)}$
        \STATE denote by $\Delta_{|\Psi|}^\epsilon$ an $\epsilon$-cover of $\Delta_{|\Psi|}$ w.r.t. $\ell_1$-norm
        \FOR{$w \in\Delta_{|\Psi|}^\epsilon$}
        \STATE Select an arbitrary \\
        $\quad \mu\in\argmax_{\hat\mu} \sum_{i=1}^{|\Psi|} w_i \times \text{OPE}(\NL,\beta,\hat\mu\times\nu^{(i)})$
        \STATE $\BR \leftarrow \BR \cup \{\mu\}$
        \ENDFOR
        \STATE return $\BR$
    \end{algorithmic}
\end{subroutine}

 \paragraph{Theoretical guarantee.} 
Now we present the theoretical guarantee for \AOPMD, under the following adaptive learning rate schedule 
\begin{equation}\label{eq:lr}
\eta_k=\sqrt{\frac{|\Psi^k|\log(K)}{(k-m^k)H^2}},
\end{equation}
 where $\Psi^k$ contains all the different policies the opponent has played before the $k^{\rm th}$ episode, and $m^k$ denotes the index of the most recent episode when EXP3 is restarted (Line 11) before the $k^{\rm th}$ episode.
 \begin{theorem}\label{thm:up-2}
    Let $c$ be a large absolute constant. 
    In Algorithm \ref{alg:md-restart}, choose  the learning rate adaptively by \eqref{eq:lr}, 
    $\epsilon=1/K$ and $\beta(n)=\sqrt{{H^2S\iota}/{\max\{n,1\}}}$
     where $\iota=c\log(SAHK/\delta)$. Then with probability at least $1-\delta$, for all $k\in[K]$
     $$
     \Reg(k)\le \cO\left( \sqrt{k\left(S^2AH^2+|\Psi^k|SAH+|\Psi^k|^2\right)H^2\iota^2} \right).
     $$
\end{theorem}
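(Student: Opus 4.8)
The plan is to run the now-standard ``optimism $+$ exponential weights'' decomposition \emph{inside each epoch} and then aggregate across epochs, where an epoch is a maximal block of consecutive episodes between two consecutive restarts (Line~\ref{line:2-9}) of Algorithm~\ref{alg:md-restart}. Write the epochs as $\mathcal{I}_1,\ldots,\mathcal{I}_M$, and let $\Phi_j$ and $\Psi_j$ be the player policy class and observed opponent set frozen throughout epoch $j$. Two structural facts drive everything: (i) a restart is triggered only by a counter doubling or by a genuinely new opponent policy, so $M \le SAH\log_2 K + |\Psi^\star| = \tilde\cO(SAH+|\Psi^\star|)$; and (ii) every opponent policy played in $\mathcal{I}_j$, except possibly the single last episode that triggers the next restart, already lies in $\Psi_j$. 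Since $\NL$ is frozen over an epoch, the update in Line~\ref{line:2-4} is exactly full-information exponential weights (Hedge) over the experts $\Phi_j$ with ``rewards'' $\up V^{\mu\times\nu^t}_1(s_1)\in[0,H]$ and the anytime learning rate~\eqref{eq:lr}.

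The first ingredient I would establish is an optimism lemma for the \emph{lazy} model: on a high-probability event (concentration of $\widehat\P$ via standard Azuma bounds, together with the doubling rule forcing $N_h\le 2\NL_h$ everywhere, so the lazy model uses at least half the samples), the output of $\text{OPE}(\NL,\beta,\mu\times\nu)$ satisfies $\up V^{\mu\times\nu}_1(s_1)\ge V^{\mu\times\nu}_1(s_1)$ for every general pair, and moreover $\up V^{\mu\times\nu}_1(s_1)-V^{\mu\times\nu}_1(s_1)\le \E_{\mu\times\nu}[\sum_h 2\beta(\NL_h(s_h,\a_h))]$. Both follow by backward induction on the trajectory-indexed value functions exactly as in the proof of Theorem~\ref{thm:up-1}, with $\beta(n)=\sqrt{H^2 S\iota/\max\{n,1\}}$ absorbing the one-step error $H\|\widehat\P_h-\P_h\|_1$.

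The second, and conceptually central, ingredient shows that competing with $\Phi_j$ inside each epoch suffices to compete with the global best policy $\mu^\circ\defeq\argmax_\mu\sum_{t\le k}V^{\mu\times\nu^t}_1(s_1)$. Restricting to $\mathcal{I}_j'\defeq\mathcal{I}_j$ minus its last episode, every $\nu^t$ equals some $\nu^{(i)}\in\Psi_j$, so grouping identical terms gives $\sum_{t\in\mathcal{I}_j'}\up V^{\mu^\circ\times\nu^t}_1=|\mathcal{I}_j'|\sum_i \hat w_i\,\up V^{\mu^\circ\times\nu^{(i)}}_1\le|\mathcal{I}_j'|\,g_j(\hat w)$, where $\hat w$ is the empirical frequency of $\Psi_j$ over $\mathcal{I}_j'$ and $g_j(w)\defeq\max_\mu\sum_i w_i\up V^{\mu\times\nu^{(i)}}_1$. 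Since $g_j$ is $H$-Lipschitz in $\ell_1$ (a max of linear functions with coefficients in $[0,H]$) and $\Phi_j=\text{OBR}(\NL,\beta,\Psi_j,\epsilon)$ stores an exact maximizer at every point of an $\epsilon$-cover, rounding $\hat w$ to the nearest grid point and back yields $|\mathcal{I}_j'| g_j(\hat w)\le\max_{\mu\in\Phi_j}\sum_{t\in\mathcal{I}_j'}\up V^{\mu\times\nu^t}_1+2H\epsilon|\mathcal{I}_j'|$. Combined with optimism ($V^{\mu^\circ\times\nu^t}_1\le\up V^{\mu^\circ\times\nu^t}_1$), this reduces the epoch-$j$ benchmark to $\max_{\mu\in\Phi_j}\sum_{t\in\mathcal{I}_j'}\up V^{\mu\times\nu^t}_1$ up to an additive $2H\epsilon|\mathcal{I}_j'|$, which telescopes to $\le 2H$ across all epochs once $\epsilon=1/K$.

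It then remains to assemble the per-epoch bounds. The anytime-Hedge guarantee over $\Phi_j$ (with $\log|\Phi_j|\le|\Psi_j|\log(3/\epsilon)=\tilde\cO(|\Psi^k|)$) gives $\max_{\mu\in\Phi_j}\sum_{t\in\mathcal{I}_j'}\up V^{\mu\times\nu^t}_1-\sum_{t\in\mathcal{I}_j'}\E_{\mu\sim\bp^t}\up V^{\mu\times\nu^t}_1\le\cO(H\sqrt{|\mathcal{I}_j'|\,|\Psi^k|\log K})$; the optimism lemma then converts $\E_{\mu\sim\bp^t}\up V^{\mu\times\nu^t}_1$ into the realized $V^{\mu^t\times\nu^t}_1$ at the cost of the summed bonuses plus two Azuma terms (for $\mu^t\sim\bp^t$ and for trajectory sampling), while the last episode of each epoch costs at most $H$. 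Summing over $j$, Cauchy--Schwarz gives $\sum_j\sqrt{|\mathcal{I}_j'|}\le\sqrt{Mk}$, so the Hedge terms total $\cO(H\sqrt{|\Psi^k|Mk\log K})=\cO(\sqrt{(|\Psi^k|SAH^3+|\Psi^k|^2H^2)k\,\iota^2})$, and the pigeonhole bound $\sum_{t\le k}\sum_h\beta(\NL_h(s_h^t,\a_h^t))=\cO(\sqrt{S^2AH^4k\,\iota})$ yields the remaining term; the restart count $M$, the cover error, and the Azuma terms are all lower order. The main obstacle I anticipate is the epoch-reduction step of the third paragraph --- making the grouping and $\epsilon$-cover Lipschitz argument rigorous for trajectory-dependent value functions, verifying that $\Phi_j$ genuinely contains a competitor for $\mu^\circ$ on $\mathcal{I}_j'$, and ensuring the frozen lazy model keeps optimism valid simultaneously across all epochs and all $\mu\in\Phi_j$.
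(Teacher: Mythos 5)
Your proposal is correct and follows essentially the same route as the paper's proof: the same decomposition into epochs delimited by restarts (paying $\cO(H)$ per restart episode), the same lazy-model optimism lemma, the same reduction of the global benchmark to $\Phi_j$ via the empirical mixture of opponent policies and the $\epsilon$-cover in the OBR subroutine, and the same anytime-EXP3 plus pigeonhole/Cauchy--Schwarz assembly. The only difference is presentational --- you spell out the $H$-Lipschitz rounding argument for the cover step that the paper states as a one-line $\epsilon H$ approximation guarantee.
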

Given that the opponent only plays policies from a finite class $\Psi^\star$, Theorem \ref{thm:up-2} guarantees that \AOPMD~ suffers regret at most 
$$\cO\left( \sqrt{k\left(S^2AH^2+|\Psi^\star|SAH+|\Psi^\star|^2\right)H^2\iota^2}\right)$$
in competing with the best \emph{general} policy in hindsight. 
Moreover, note that the bound in Theorem \ref{thm:up-2} depends linearly on the number of different historical opponent policies. As a result, the regret of $\AOPMD$ is still sublinear even if the opponent policy class keeps  expanding as $k$ increases, as long as its cardinality is order $\small{o}(\sqrt{k})$.
 The proof of Theorem \ref{thm:up-2} can be found in Appendix \ref{pf:up-2}.

\subsection{Statistical hardness with large $\Phi^\star$ and $\Psi^*$}
Theorem~\ref{thm:up-1} and~\ref{thm:up-2} show that when either $\log|\Phi^\star|$ (the log-cardinality of the baseline policy class) or $|\Psi^*|$ (the cardinality of the opponent's policy class) is ploynomial, a sublinear regret bound is obtainable. We now complement these two results with a lower bound when both conditions are violated, i.e., when the size of $\Phi^\star$ is doubly exponential and the size of $\Psi^*$ is exponential.

\begin{theorem}
	\label{thm:general-lower}
	There exists a Markov game with $S=1$, $|\cA_{\max}|=|\cA_{\min}|=2$, $|\Psi^*|=2^H$ such that the regret for competing with the best general policy in hindsight is $\Omega(\min\{K, 2^H\})$, even if the adversary reveals her policy after each episode.
\end{theorem}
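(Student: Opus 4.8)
The plan is to exhibit a single trivial-state Markov game in which the revealed-policy feedback is information-theoretically useless, so that the learner is forced to predict a fresh random bit at each step while the best general policy in hindsight can perfectly fit the realized data. Concretely, take $S=1$ (so a trajectory is just the sequence of action pairs), $\cA_{\max}=\cA_{\min}=\{0,1\}$, horizon $H$, and reward $r_h(s,(a,b))=\mathbf{1}[a=b]$ (the learner is rewarded for matching the opponent). Let $\Psi^\star=\{\nu_z : z\in\{0,1\}^H\}$, where $\nu_z$ is the deterministic \emph{Markov} policy that plays $b_h=z_h$ at step $h$ regardless of history, giving $|\Psi^\star|=2^H$ as required. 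The adversary draws $z^k\sim\mathrm{Unif}(\{0,1\}^H)$ i.i.d.\ across episodes and plays $\nu_{z^k}$, revealing it afterward; since the $z^k$ are independent, the revealed policies carry no predictive information about future episodes, which is exactly what makes revelation unhelpful here.

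First I would bound the learner's value. At step $h$ of episode $k$ the learner must commit $a_h^k$ simultaneously with the opponent, hence before observing $b_h^k=z_h^k$, and $z_h^k$ is uniform and independent of everything the learner has seen (the within-episode prefix $z^k_{<h}$, all past episodes, and internal randomness). Therefore $\P[a_h^k=z_h^k]=1/2$ for any algorithm, so $\E[\sum_k V_1^{\mu^k\times\nu^k}(s_1)]=KH/2$. Next I would compute the best fixed general policy in hindsight: since the opponent ignores the learner, for any fixed $\mu$ the learner's history in episode $k$ is a deterministic function of the prefix $z^k_{<h}$, so the reward profile of $\mu$ is realized by some map $g_h:\{0,1\}^{h-1}\to\{0,1\}$, and conversely every such family is achievable by a general policy. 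The hindsight value thus decomposes into an independent per-context majority vote, $\sum_h\sum_{p\in\{0,1\}^{h-1}}\max\{n_{h,p,0},n_{h,p,1}\}$, where $n_{h,p,c}$ counts episodes with $z^k_{<h}=p$ and $z^k_h=c$. Using $\max\{a,b\}=\tfrac{a+b}{2}+\tfrac{|a-b|}{2}$ together with $\sum_p(n_{h,p,0}+n_{h,p,1})=K$, this equals $KH/2+\tfrac12\sum_h\E\,D_h$ with $D_h=\sum_p|n_{h,p,0}-n_{h,p,1}|$, so $\E[\Reg(K)]=\tfrac12\sum_h\E[D_h]\ge\tfrac12\,\E[D_H]$.

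The crux is to lower bound $\E[D_H]$ by a balls-into-bins occupancy argument: treat the $K$ episodes as balls thrown into the $2^{H-1}$ prefix-bins $p\in\{0,1\}^{H-1}$. Conditioned on the load $m_p$ of a bin, the imbalance is distributed as $|S_{m_p}|$ for a symmetric $\pm1$ walk, and $\E|S_m|\ge1$ for every $m\ge1$; hence $\E|n_{H,p,0}-n_{H,p,1}|\ge\P[m_p\ge1]$, giving $\E[D_H]\ge\E[\#\{\text{occupied bins}\}]=2^{H-1}\bigl(1-(1-2^{-(H-1)})^K\bigr)$. The elementary estimate $1-e^{-x}\ge\tfrac12\min\{x,1\}$ then yields $\E[D_H]\ge\tfrac12\min\{K,2^{H-1}\}\ge\tfrac14\min\{K,2^H\}$. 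Combining, $\E[\Reg(K)]\ge\tfrac18\min\{K,2^H\}$ against this randomized, $\Psi^\star$-supported adversary; since the bound holds for every (randomized) learner, the worst-case regret of any algorithm is $\Omega(\min\{K,2^H\})$, proving the theorem.

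The main obstacle is the bookkeeping in the hindsight decomposition: verifying that the optimum over the doubly-exponentially many general policies reduces to an independent per-context majority vote (which requires the observation that a best general response need only depend on the opponent's action history), and that the resulting total imbalance admits a single occupancy bound that covers both the $K\le 2^H$ and $K>2^H$ regimes. By contrast, the value computation and the ``feedback is useless'' step are comparatively immediate.
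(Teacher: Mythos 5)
Your proof is correct, and it uses the same hard instance as the paper (a one-state matching game with reward $\mathbf{1}[a=b]$ and an opponent drawing a uniformly random deterministic Markov policy each episode, so that the revealed policies are i.i.d.\ and carry no predictive information), but the analysis of the hindsight benchmark is genuinely different. The paper places the reward only at step $H$ and uses a collision argument: when $K<2^{H-2}$, each prefix $b_{1:H-1}$ appears in at most one episode with constant probability, so a memorizing general policy earns value $1$ on a constant fraction of episodes while the learner averages $1/2$, giving regret $\Omega(K)$ in that regime (and the $\min\{K,2^H\}$ form follows by truncation). You instead place reward at every step, observe that the optimum over general policies decomposes exactly into independent per-context majority votes $\sum_{h,p}\max\{n_{h,p,0},n_{h,p,1}\}$, and lower bound the total imbalance via $\E|S_m|\ge 1$ for a symmetric walk plus a balls-into-bins occupancy count, which yields $\E[\Reg(K)]\ge\tfrac18\min\{K,2^H\}$ in a single formula valid for all $K$. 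Your route is longer but more quantitative: it handles both regimes uniformly without case-splitting, and (since you keep only the $h=H$ term of $\tfrac12\sum_h\E[D_h]$) it actually has slack that could be used to extract an extra factor of $H$. The paper's argument is shorter and suffices for the stated $\Omega(\min\{K,2^H\})$ bound. All the delicate steps in your write-up check out: the reduction of a deterministic general best response to a function of the opponent's action history alone, the exact interchange of $\max_\mu$ with the sum over contexts, the monotonicity of $\E|S_m|$, and the conversion from an expected-regret bound against a randomized adversary to a worst-case statement.
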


The construction for this lower bound is quite simple. Consider a Markov game with horizon $H$ and only $1$ state. The agent only receives non-zero reward if at the final time step, it plays the same action as the opponent, \rm{i.e.},  $r_H(s,(a,b))=\mathbf{1}[a=b]$. Now, suppose that in each episode, the opponent samples randomly from the set of all deterministic Markov policies; any algorithm would have an expected value of ${1}/{2}$, as $b_H\sim {\rm Ber}(1/2)$. However, the best history-dependent policy in hindsight would be able to predict $b_H$ by memorizing $b_1,\cdots,b_{H-1}$ when the number of episodes is not exponentially large. This gives the claimed  $\Omega(\min\{K, 2^H\})$ lower bound. A formal proof can be found in Appendix \ref{pf:general-lower}.

\section{Computational Hardness}
\label{sec:comp}

Finally, we provide a computational lower bound for this problem. We remark that this lower bound holds even if 
(a) the transitions of the Markov game are known, (b) the opponent reveals the policy she just played at the end of each episode, and (c) the opponent can only choose from a small known set of Markov policies ($|\Psi^*|=\cO(H)$). 
Therefore, the lower bound applies to all the settings considered in this paper.

\begin{theorem}
	\label{thm:computation-lower}
	If an algorithm achieves ${\rm poly}(S,A,H)\cdot K^{1-c}$ expected regret with a constant $c>0$ in the setting that satisfies the above condition $(a),(b),(c)$, then its computational complexity cannot be ${\rm poly}(S,A,H,K)$ unless ${\rm NP\subseteq BPP}$.\footnote{BPP is the probabilistic version of P, and ${\rm NP\subseteq BPP}$ is believed to be highly unlikely in computational complexity literature.}
\end{theorem}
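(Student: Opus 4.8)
The plan is to reduce some NP-hard problem to the regret-minimization task so that any computationally efficient low-regret algorithm would yield an efficient (randomized) algorithm for the NP-hard problem. Given the emphasis in the statement on a \emph{known} set of Markov policies for the opponent, a \emph{known} transition, and competing only against the best Markov policy in hindsight, the natural source of hardness cannot be statistical (the setting is too favorable) and must instead come from the difficulty of \emph{computing} the best response or the best baseline policy. So the target reduction should be from a problem like 3-SAT (or MAX-3-SAT / a constraint-satisfaction problem), encoded into the structure of which Markov policy performs well against which opponent policy.

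Let me sketch the construction. I would build a Markov game in which the min-player's known policy class $\Psi^\star = \{\nu^{(1)},\dots,\nu^{(m)}\}$ with $m = \cO(H)$ encodes the clauses (or variables) of a SAT instance, and the max-player's Markov policies encode candidate variable assignments. The key is to arrange the reward and transition structure so that the value $V_1^{\mu\times\nu^{(i)}}$ is large precisely when the assignment $\mu$ satisfies the constraint $\nu^{(i)}$. Because the opponent samples $\nu$ uniformly (or from a fixed known distribution) over $\Psi^\star$, the value of a fixed baseline policy $\mu$ against the realized mixture is the average satisfaction rate of the assignment $\mu$ over all constraints. The best fixed Markov policy in hindsight therefore corresponds to the assignment maximizing the fraction of satisfied clauses, which is exactly a MAX-SAT optimum. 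An algorithm with sublinear regret $\poly(S,A,H)\cdot K^{1-c}$ must, after $K$ episodes, have average per-episode value approaching that of the best baseline policy; by running it and inspecting its empirical play (or the induced empirical policy), we extract an assignment that is near-optimal, letting us distinguish satisfiable from far-from-satisfiable instances and thereby decide SAT.

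First I would fix the gadget so that each opponent policy $\nu^{(i)}$ acts as a ``clause checker'': the transition dynamics steer the game, over the $H$ steps, through a sequence of states encoding the variables, and $\nu^{(i)}$ deterministically probes the variables appearing in clause $i$, granting reward $1$ iff the max-player's deterministic Markov policy plays the literal values that satisfy the clause. This keeps $S, A = \poly(n)$ for an $n$-variable instance and $|\Psi^\star| = \cO(H)$ after padding/grouping clauses appropriately (one may bundle $\poly(n)$ clauses into each of $\cO(H)$ opponent policies to meet the $\cO(H)$ cardinality constraint, or inflate $H$). Next I would verify the known-model and revealed-policy conditions are trivially met: the transition is deterministic and published, and revealing $\nu^k$ gives the learner no information it did not already have, so conditions (a),(b),(c) hold by construction. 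Then I would carry out the regret-to-optimization conversion: since $\Reg_{\Phi^\star}(K) = \max_{\mu}\sum_k (V_1^{\mu\times\nu^k} - V_1^{\mu^k\times\nu^k})$ is at most $\poly\cdot K^{1-c}$, the time-averaged value of the algorithm is within $o(1)$ of the best assignment's satisfaction rate, so rounding the time-averaged play yields a near-MAX-SAT solution with high probability over the algorithm's and opponent's randomness, placing the decision problem in $\mathrm{BPP}$ and contradicting $\mathrm{NP}\not\subseteq\mathrm{BPP}$.

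The main obstacle I anticipate is twofold. First, meeting all three restrictions simultaneously --- in particular $|\Psi^\star| = \cO(H)$ together with a constant action set --- while still encoding an NP-hard instance faithfully requires a careful, economical gadget; the danger is that a naive encoding blows up either the number of opponent policies or the action space. I expect the resolution is to spread the clause-checking across the $H$ time steps and to let the opponent's $\cO(H)$ policies each handle a batch of constraints, using the horizon as the ``width'' of the computation. Second, and more subtly, the regret bound only controls the \emph{time-average} value, not the final policy; I must argue that a near-optimal time-average translates into an explicit near-optimal \emph{assignment}. The clean way is to ensure the value functional is (or can be made) such that the averaged mixed strategy the algorithm plays itself certifies a good assignment --- e.g.\ by a gap/soundness argument where the MAX-SAT instance comes from a PCP-style hardness of approximation, so that a near-optimal average value provably forces a satisfying (or almost-satisfying) assignment that can be read off and checked deterministically. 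Getting this gap amplification right, so that the $K^{1-c}$ slack is absorbed below the approximation threshold, is the crux of the proof.
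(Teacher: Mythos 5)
Your reduction skeleton is exactly the paper's (which follows Proposition~6 of Steimle et al.): states encode variables, the max-player's binary action at state $s_i$ fixes $x_i$, each opponent policy $\nu_j$ is a clause checker, transitions are deterministic and known, and $V_1^{\mu\times\nu_j}(s_1)=1$ iff the assignment induced by $\mu$ satisfies clause $j$. But you explicitly leave the regret-to-decision step open, calling the ``gap amplification'' the crux and proposing to route it through PCP-style hardness of approximation plus extraction of an assignment from the time-averaged play. That is where the gap is: neither ingredient is needed, and the extraction step as you describe it (``rounding the time-averaged play'') is the shakier of the two, since the algorithm's per-episode policies can be arbitrary randomized objects and there is no canonical single assignment to read off.

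The missing observation is that plain (un)satisfiability already supplies a usable gap of $1/m$: against a clause index $j$ drawn uniformly from $[m]$, \emph{every} max-player policy (deterministic, randomized, or history-dependent, since each induces a distribution over assignments) has expected value at most $1-1/m$ when the formula is unsatisfiable, while a satisfying assignment achieves value $1$ against every clause. Because the regret hypothesis is ${\rm poly}(S,A,H)\cdot K^{1-c}$ with $c>0$ a constant, you may choose $K={\rm poly}(n,m)$ large enough that the regret is at most $K/(4m)$ and the Hoeffding fluctuation $O(\sqrt{K})$ is at most $K/(20m)$; then simply thresholding the cumulative reward $R$ at $(1-1/(2m))K$ decides satisfiability with probability $0.99$, placing 3-SAT in BPP. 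No approximation threshold, no gapped MAX-SAT instance, and no assignment ever needs to be extracted. (Your other worry, fitting $|\Psi^\star|=\cO(H)$ with constant actions, is handled as you guessed, by padding the horizon so that $H\geq m$; this is minor.)
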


This computational lower bound suggests that the best we can hope for is a statistically efficient but computationally intensive algorithm. It also renders statistically efficient value-iteration or Q-learning style algorithms for this problem unlikely, unless they employ NP-hard subroutines.

The proof of Theorem~\ref{thm:computation-lower} depends on the construction in Proposition $6$ of~\citet{steimle2021multi}, which reduces solving 3-SAT to finding the best Markov policy in a latent MDP. 
We provide a full proof in Appendix~\ref{pf:comp}.


\section{Conclusion}

This paper studies no-regret learning  of Markov games with adversarial opponents.
We provide a complete set of positive and negative results for competing with the best fixed policy in hindsight.
In the standard setting where only the actions of opponents are revealed, we prove it is statistically intractable to compete with the best fixed Markov policy in hindsight, even if  the opponent only chooses from a limited number of Markov policies.
In the revealed-policy setting, we propose new algorithms with $\sqrt{K}$-regret bound when either the log-cardinality of the baseline policy class  or the cardinality of the opponent’s policy class is small. 
Additionally, an exponential lower bound is derived  when both quantities are large.
Finally, we turn to the computational efficiency and prove achieving sublinear regret is in general computationally hard even in the very benign scenario.

\section*{Acknowledge}
We thank Zhuoran Yang for valuable discussions. 

\bibliographystyle{plainnat}
\bibliography{ref}

\newpage
\appendix

\section{Proofs for Section \ref{sec:action-hardness}}

\subsection{Proof of Theorem \ref{thm:action-general}}
\label{app:general}
    Because we can simulate any POMDP with a MG by using Proposition \ref{prop:pomdp}, it suffices to show there exists a hard POMDP instance with $\cO(1)$ number of states, actions and observations so that any algorithm will suffer $\Omega(\min\{4^H,K\})$ regret 
    when competing with the optimal Markov policy of this POMDP.
    
    We use the hard instance constructed in \citet{jin2020sample}. There are two  states $s_g,s_b$ and four actions. There is special action sequence $a^\star_1,\ldots,a^\star_{H-1}$ sampled independently and uniformly at random from the action set, which is unknown to the learner. The transition dynamics are constructed so that (a) the agent always starts in  $s_g$ at step $1$, (b) at each step $h$ the agent will transition to $s_g$ if and only if she is currently in $s_g$ and plays the special action $a^\star_h$, and otherwise will go to $s_b$. 
    At the first $H-1$ steps, the two states emit the \emph{same} observation that contains reward $0$.
     At step $H$, $s_g$ emits reward $1$ while $s_b$ still emits a zero-reward observation. 
     It is straightforward to see the optimal policy is to play the special action sequence, which is \emph{Markov}. However, because $s_g$ and $s_b$ are totally indistinguishable from observations at the first $H-1$ steps, finding this action sequence will cost at least $\Omega(4^H)$ episodes in general, which implies a $\Omega(\min\{4^H,K\})$ regret lower bound for competing with the optimal Markov policy.

\subsection{Proof of Proposition \ref{prop:pomdp}}
\label{app:pomdp}

We describe how to simulate a POMDP with a Markov game and an opponent playing a fixed general policy. 

Each step in the POMDP is simulated by two consecutive steps in the Markov game, and the transition dynamics of the Markov game have the following special structures:
\begin{itemize}
    \item At an even step, the transition only depends on the action of the opponent. 
    Moreover, the next state is always equal to the opponent's action regardless of  the current state.
    \item At an odd step, the transition only depends on the action of the learner, and the next state is simply an augmentation of the current state and the learner's action. 
\end{itemize}

Specifically, suppose in the POMDP, at step $h$, the learner starts with history $o_1,a_1,\ldots,o_h$ and plays action $a_h$, then observes $o_{h+1}$ sampled from $\P(o_{h+1}=\cdot \mid o_1,a_1,\ldots,o_h,a_h)$. 
In this case, the corresponding two steps in the POMG will be: at step $2h-1$, the learner starts at state $o_h$ and takes action $a_h$, then the environment transitions to state $(o_h,a_h)$; at step $2h$, the opponent starts at state $(o_h,a_h)$ and takes action $o_{h+1}$ sampled from  $\P(o_{h+1}=\cdot \mid o_1,a_1,\ldots,o_h,a_h)$, then the environment transitions to $o_{h+1}$ that is exactly equal to the action of the opponent. Note that here the opponent is playing a history-dependent policy. 

It is direct to see there are $O A+ O$ distinct states, $A$ actions for the learner and $O$ actions for the opponent in this Markov game. Besides, the episode length is $2H$.

\subsection{Proof of Theorem \ref{thm:action-markov}}\label{app:Markov}
    By Proposition \ref{prop:latent},  we can simulate any latent MDP with a MG.
    As a result, it suffices to show there exists a hard latent MDP with $\cO(1)$ states, $\cO(H)$ actions and $H$ latent variables  so that any algorithm will suffer $\Omega(\min\{4^H,K\}/H)$ regret when competing with the optimal Markov policy of this latent MDP.
    
    We utilize the hard latent MDP instance constructed in Theorem 3.1~\citep{kwon2021rl}.\footnote{Despite  \citet{kwon2021rl} study the stationary setting, their constructions can be trivially adapted to handle the nonstationary setting and gives a stronger lower bound which is the one we state here.}
     In the latent MDP instance, there is a collection of $H$ unknown MDPs, each of which has $\cO(1)$ states, $\cO(H)$ actions and binary rewards. 
    At the beginning of each episode the environment \emph{secretly} draws an  MDP uniformly at random from these $H$ MDPs, and then the algorithm interacts with this MDP without knowing which one it is. \citet{kwon2021rl} prove that it takes $\Omega(4^H)$ episodes to learn a policy that is $\cO({1}/{H})$-optimal compared to the best Markov policy, where the optimality is defined using the average value over the $H$ MDPs. 
    By the standard  online-to-batch conversion \citep[e.g.,][]{lattimore2020bandit}, it immediately implies a $\Omega(\min\{4^H,K\}/H)$ regret lower bound for competing with the optimal Markov policy.

\subsection{Proof of Proposition \ref{prop:latent}}
\label{app:latent}

To begin with, we recall the definition of latent MDPs \citep{kwon2021rl}.
At the beginning of each episode the environment \emph{secretly} draws an  MDP uniformly at random from $L$ unknown MDPs, then the algorithm interacts with this MDP without knowing which one it is. 

Denote by $q\in\Delta_L$ the latent distribution over these $L$ MDPs and $\P^i_h(s'|s,a)$ ($r_h^i(s,a)$) the transition (reward) function of the $i^{\rm th}$ MDP. 
In each episode of the Markov game
\begin{itemize}
	\item The opponent secretly samples  $t\sim q$ before step $1$, and keeps it hidden from the learner throughout.
	\item At step $2h-1$, the transitions are deterministic, and only depend on the current state and the learner's action. Specifically, the environment will transition to an augmenting state $(s,a)$ if the learner takes action $a$ at state $s$ regardless of what action the opponent picks. There is no reward at this step.
	\item At step $2h$, the transitions and rewards are still deterministic, but only depend on the opponent's action. Formally, the environment will transition to state $s'$ from an augmenting state $(s,a)$ and the learner will receive reward $r'\in\{0,1\}$, if the opponent takes action $(s',r')$, the probability of which is given by\vspace{-2mm}
	$$\nu_{2h}\left((s',r')|(s,a),t\right)=\P_{h}^t(s'| s,a)\times \mathbf{1}\left(r_h^t(s,a)=r'\right).$$
\end{itemize}

It is direct to see interacting with this MG is exactly equivalent to interacting with the original latent MDP. In particular, there is \emph{no additional information} revealed in the MG because the opponent's action is always equal to the next state and the reward.

\section{Proofs for Section \ref{sec:reveal-policy}}

\subsection{Proof of Theorem \ref{thm:up-1} }
\label{pf:up-1}
We first introduce several notations that will be frequently used in our proof.
 Let $\tau_h = [s_1,\a_1,\ldots,s_{h-1},\a_{h-1},s_h]$.
 Denote by $N^k$ the collection of counters at the \emph{beginning} of episode $k$. Denote by $\hat\P^k$ the empirical transition computed by using $N^k$, i.e., for any $(s,\a,h,s')\in\cS\times\cA\times[H]\times\cS$
\begin{align*}
\widehat\P_h^k(s'\mid s,\a) = 
\begin{cases}
     &\frac{N_{h}^k(s,\a,s')}{N_h^k(s,\a)}\qquad \text{ if } N_h(s,\a)\neq0, \\
     &1/S\qquad\qquad\  \text{otherwise}.
\end{cases}
\end{align*}
Given an arbitrary policy $\pi$, we define $\up{V}^{\pi,k}$ ($\up{Q}^{\pi,k}$) that is the optimistic estimate of $V^\pi$ ($Q^\pi$) as following: for any $h\in[H]$,
\begin{equation}
\begin{cases}
\up V_h^{k,\pi}(\tau_h) = \E_{\a\sim\pi(\cdot\mid\tau_{h})} \left[\up Q_h^{k,\pi}(\tau_h,\a) \right],\\
\up Q_h^{k,\pi}(\tau_h,\a) =  \min\left\{ \E_{s' \sim\hat\P^k_h(\cdot \mid s_{h},\a)} \left[\up V_{h+1}^{k,\pi}([\tau_h,\a,s']) \right] +r_h(s_h,\a)+ \beta(N_h^k(s_h,\a)), H-h+1\right\},
\end{cases}
\end{equation}
and we define $\up V_{H+1}^{k,\pi} \equiv 0$. We comment that by definition $\up{V}_1^{k,\pi}(s_1)=\text{UCB-VI}(N^k,\beta,\pi)$ for all $k,\pi$.

For the purpose of proof, we further introduce the following auxiliary function for controlling the optimism of $\up{V}^{\pi,k}$ ($\up{Q}^{\pi,k}$) against the true value function $V^\pi$ ($Q^\pi$): for any $h\in[H]$
\begin{equation}
\begin{cases}
\tilde V_h^{k,\pi}(\tau_h) = \E_{\a\sim\pi(\cdot\mid\tau_h)} \left[\tilde Q_h^{k,\pi}(\tau_h,\a) \right],\\
\tilde Q_h^{k,\pi}(\tau_h,\a) =  \min\left\{ \E_{s' \sim\P_h(\cdot \mid s_{h},\a)} \left[\tilde V_{h+1}^{k,\pi}([\tau_h,\a,s']) \right] +2\beta(N^k_h(s_h,\a)), H-h+1\right\},
\end{cases}
\end{equation}
and we define $\tilde V_{H+1}^{k,\pi} \equiv 0$. 
Compared to $\up{V}^{k,\pi}$, $\tilde V_h^{k,\pi}$ is defined using the groundtruth transition function $\P$, it does not contain the reward function and the bonus function is doubled.

Finally, recall we choose the bonus function to be 
$$
\beta(t) = H\sqrt{\frac{S\iota}{\max\{t,1\}}},
$$
where  $\iota =\log(KHSA/\delta)$ with $c$ being some large absolute constant.

\begin{lemma}[Optimism]\label{lem:optimisL-1}
    With probability at least $1-\delta$, for all $(k,h)\in[K]\times[H+1]$ and all general policy $\pi$, 
    $$
    0 \le \up V^{k,\pi}_{h}(\tau_h) - V^{\pi}_{h}(\tau_h) \le \tilde V^{k,\pi}_{h}(\tau_h) \quad \text{for all }\tau_h.
    $$
    \end{lemma}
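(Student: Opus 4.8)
The plan is to prove the two-sided bound by backward induction on $h$, running from $h = H+1$ down to $h = 1$, establishing simultaneously both the lower inequality $\up V^{k,\pi}_h \ge V^\pi_h$ (optimism) and the upper inequality $\up V^{k,\pi}_h - V^\pi_h \le \tilde V^{k,\pi}_h$ (controlled optimism). The base case $h = H+1$ is immediate since all three quantities are identically zero. The entire argument is conditioned on a single high-probability event, so first I would fix that event: by a standard empirical-Bernstein or Hoeffding concentration bound together with a union bound over all $(k, h, s, \a)$ and over a suitable net, with probability at least $1 - \delta$ we have for every step the deviation control
\[
\Bigl| \E_{s' \sim \hat\P^k_h(\cdot \mid s, \a)}[f(s')] - \E_{s' \sim \P_h(\cdot \mid s, \a)}[f(s')] \Bigr| \le \beta(N^k_h(s,\a))
\]
for all bounded functions $f$ with $\|f\|_\infty \le H$; the chosen bonus $\beta(t) = H\sqrt{S\iota / \max\{t,1\}}$ is calibrated precisely so this holds. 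The key point is that this event is defined purely in terms of the model counts and does not reference $\pi$, so once it holds the inductive step goes through for \emph{all} general policies $\pi$ simultaneously, which is exactly what the lemma requires.

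For the inductive step, I assume both inequalities hold at step $h+1$ for all trajectories $\tau_{h+1}$, and prove them at step $h$. For the lower bound, I expand $\up Q^{k,\pi}_h(\tau_h, \a)$; if the $\min$ in its definition is realized by the cap $H - h + 1$, then since $Q^\pi_h \le H - h + 1$ trivially we are done, so assume the other branch is active. Then I write
\[
\up Q^{k,\pi}_h(\tau_h,\a) - Q^\pi_h(\tau_h,\a) = \E_{\hat\P^k_h}\bigl[\up V^{k,\pi}_{h+1}\bigr] - \E_{\P_h}\bigl[V^\pi_{h+1}\bigr] + \beta(N^k_h(s_h,\a)),
\]
split this as $\bigl(\E_{\hat\P^k_h} - \E_{\P_h}\bigr)[\up V^{k,\pi}_{h+1}]$ plus $\E_{\P_h}[\up V^{k,\pi}_{h+1} - V^\pi_{h+1}]$ plus the bonus; the second term is $\ge 0$ by the induction hypothesis, and the first term is $\ge -\beta$ on the good event, so the bonus cancels it and the difference is nonnegative. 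Taking expectation over $\a \sim \pi$ lifts this to the $V$-level. For the upper bound I run the analogous computation: the model-deviation term is now $\le +\beta$ and the recursion term is $\le \E_{\P_h}[\tilde V^{k,\pi}_{h+1}]$ by induction, so $\up Q^{k,\pi}_h - Q^\pi_h \le \E_{\P_h}[\tilde V^{k,\pi}_{h+1}] + 2\beta$, which (after applying the $\min$ caps compatibly, noting that capping can only decrease the left side while $\tilde Q$ carries the same cap) is exactly $\tilde Q^{k,\pi}_h(\tau_h,\a)$; the doubled bonus in the definition of $\tilde V$ is what absorbs both the model-deviation $\beta$ and the inherited slack.

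The main obstacle, and the step requiring the most care, is the uniform concentration event in the opening paragraph: because the lemma quantifies over \emph{all} general (history-dependent) policies $\pi$ and all trajectories $\tau_h$, I cannot union-bound over policies directly. The resolution is that the concentration statement is phrased at the level of the transition kernels and bounded test functions $f$, \emph{not} at the level of policies, so a union bound over the finitely many tuples $(k,h,s,\a)$ suffices and the policy-dependence is handled entirely by the deterministic induction. I must verify that $\up V^{k,\pi}_{h+1}$ is a legitimate test function with the required $\ell_\infty$ bound, which follows from the cap $\up V \le H - h$, and that $\tilde V^{k,\pi}$ likewise respects its cap so the $\min$ operations commute correctly with the inequalities; the cap is the reason I treat the two branches of the $\min$ separately rather than naively.
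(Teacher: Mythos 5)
Your proposal is correct and follows essentially the same route as the paper: condition on a single policy-independent concentration event for the empirical transition kernels (the paper states it as an $\ell_1$ bound $\|\hat\P^k_h(\cdot\mid s,\a)-\P_h(\cdot\mid s,\a)\|_1\le \beta(N^k_h(s,\a))/H$, which is equivalent to your test-function formulation via H\"older and needs no extra net), then run a deterministic backward induction from $h=H+1$, treating the two branches of the $\min$ separately and letting the doubled bonus in $\tilde Q$ absorb the model-deviation term plus the inherited slack. The paper's argument matches yours in all essentials, including the key observation that uniformity over all general policies comes for free once the event is phrased at the level of the kernels.
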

    \begin{proof}[Proof of Lemma \ref{lem:optimisL-1}]
    To begin with, by the Azuma-Hoeffding inequality and standard union bound argument, we have that with probability at least $1-\delta$:
    $$
\| \hat\P^k_h(\cdot\mid s,\a)- \P_h(\cdot\mid s,\a)\|_1 \le \frac{1}{H} \beta(N_h^k(s,\a)) \quad \text{ for all }\ (s,\a,h,k) \in\cS\times\cA\times[H]\times[K]. 
    $$
    Below, we prove the lemma conditioning on the event above being true. We prove the lemma by induction and start with the upper bound. The inequality holds for step $H+1$ trivially because 
    $V^{k,\pi}_{H+1} = V^{\pi}_{H+1} =  \tilde V^{k,\pi}_{H+1} = 0$. Assume the inequality holds for step $h+1$. At step $h$, notice that 
    \begin{align*}
        \tilde V_h^{k,\pi}(\tau_h) = & \E_{\a\sim\pi(\cdot\mid\tau_h)} \left[\tilde Q_h^{k,\pi}(\tau_h,\a) \right],\\
        \up V^{k,\pi}_{h}(\tau_h) - V^{\pi}_{h}(\tau_h) 
        = & \E_{\a\sim\pi(\cdot\mid\tau_h)} \left[\up Q^{k,\pi}_{h}(\tau_h,\a) - Q^{\pi}_{h}(\tau_h,\a)\right].
    \end{align*}
    Therefore, it suffices to prove 
    $$
    \up Q^{k,\pi}_{h}(\tau_h,\a) - Q^{\pi}_{h}(\tau_h,\a) \le \tilde Q_h^{k,\pi}(\tau_h,\a) \quad \text{for all }\tau_h,\a,
    $$
    which follows from
    \begin{align*}
        & \up Q^{k,\pi}_{h}(\tau_h,\a) - Q^{\pi}_{h}(\tau_h,\a) \\
    \le & \min\left \{   \E_{s' \sim\hat\P_h^k(\cdot \mid s_{h},\a)} \left[\up V_{h+1}^{k,\pi}([\tau_h,\a,s']) \right]- \E_{s' \sim\P_h(\cdot \mid s_{h},\a)} \left[ V_{h+1}^{\pi}([\tau_h,\a,s']) \right]+\beta(N_h^k(s_h,\a)), H-h+1     \right\}\\
    = & \min\bigg \{   \E_{s' \sim\hat\P_h^k(\cdot \mid s_{h},\a)} \left[\up V_{h+1}^{k,\pi}([\tau_h,\a,s']) \right]- \E_{s' \sim\P_h(\cdot \mid s_{h},\a)} \left[\up V_{h+1}^{k,\pi}([\tau_h,\a,s']) \right] \\
    & \qquad + \E_{s' \sim\P_h(\cdot \mid s_{h},\a)} \left[\up V_{h+1}^{k,\pi}([\tau_h,\a,s']) - V_{h+1}^{\pi}([\tau_h,\a,s']) \right]+\beta(N_h^k(s_h,\a)), H-h+1        \bigg\}\\
    \le & \min\left\{ 2\beta(N_h^k(s_h,\a))+ \E_{s' \sim\P_h(\cdot \mid s_{h},\a)} \left[\tilde V_{h+1}^{k,\pi}([\tau_h,\a,s']) \right], H-h+1     \right\} =\tilde Q_h^{k,\pi}(\tau_h,\a), 
    \end{align*}
    where the last inequality follows from  the induction hypothesis and $\| \hat\P^k_h(\cdot\mid s_h,\a)- \P_h(\cdot\mid s_h,\a)\|_1 \le  \beta(N_h^k(s_h,\a))/H$. 

    Similarly, for the lower bound, we only need to show 
    $$
    \up Q^{k,\pi}_{h}(\tau_h,\a) \ge  Q^{\pi}_{h}(\tau_h,\a)  \quad \text{for all }\tau_h,\a,
    $$
    which follows similarly from
    \begin{align*}
        & \up Q^{k,\pi}_{h}(\tau_h,\a) - Q^{\pi}_{h}(\tau_h,\a) \\
    \ge & \min\left \{   \E_{s' \sim\hat\P_h^k(\cdot \mid s_{h},\a)} \left[\up V_{h+1}^{k,\pi}([\tau_h,\a,s']) \right]- \E_{s' \sim\P_h(\cdot \mid s_{h},\a)} \left[ V_{h+1}^{\pi}([\tau_h,\a,s']) \right]+\beta(N_h^k(s_h,\a)), 0   \right\}\\
    = & \min\bigg \{   \E_{s' \sim\hat\P_h^k(\cdot \mid s_{h},\a)} \left[\up V_{h+1}^{k,\pi}([\tau_h,\a,s']) \right]- \E_{s' \sim\P_h(\cdot \mid s_{h},\a)} \left[\up V_{h+1}^{k,\pi}([\tau_h,\a,s']) \right] \\
    & \qquad + \E_{s' \sim\P_h(\cdot \mid s_{h},\a)} \left[\up V_{h+1}^{k,\pi}([\tau_h,\a,s']) - V_{h+1}^{\pi}([\tau_h,\a,s']) \right]+\beta(N_h^k(s_h,\a)), 0      \bigg\}\\
    \ge & \min\bigg \{   -\beta(N_h^k(s_h,\a)) + \E_{s' \sim\P_h(\cdot \mid s_{h},\a)} \left[\up V_{h+1}^{k,\pi}([\tau_h,\a,s']) - V_{h+1}^{\pi}([\tau_h,\a,s']) \right]+\beta(N_h^k(s_h,\a)),0 \bigg\}\\
    \ge & 0, 
    \end{align*}
    where the last inequality follows from  the induction hypothesis and the second last one uses $\| \hat\P^k_h(\cdot\mid s_h,\a)- \P_h(\cdot\mid s_h,\a)\|_1 \le  \beta(N_h^k(s_h,\a))/H$. 
    \end{proof}

\begin{proof}[Proof of Theorem \ref{thm:up-1}]
    In the remainder of this section, we show how to control $\Reg(K)$. The upper bound for $\Reg(k)$ ($k\in[K]$) can be derived by repeating precisely the same arguments.

For simplicity of notations, denote $\pi^k = \mu^k\times \nu^k$.
   By the optimism of $\up{V}$ (Lemma \ref{lem:optimisL-1}), with probability at least $1-\delta$,
    \begin{align*}
        &\max_{\mu^\star}\sum_{k=1}^K V^{\mu^\star\times \nu^k}_1(s_1) - \sum_{k=1}^K V^{\pi^k}_1(s_1)\\
        \le   & \left(\max_{\mu^\star} \sum_{k=1}^K \up V^{\mu^\star\times\nu^k,k}_1(s_1) - \sum_{k=1}^K   \mathbb{E}_{\mu\sim\bp^k} [\up{V}^{\mu\times\nu^k,k}_1(s_1) ]\right)
        +
        \left( \sum_{k=1}^K   \mathbb{E}_{\mu\sim\bp^k} [\up{V}^{\mu\times\nu^k,k}_1(s_1) ]  - \sum_{k=1}^KV^{\pi^k}_1(s_1)\right).
    \end{align*}
    The first term is upper bounded by the regret bound of anytime EXP3, which is of order $\mathcal O(H\sqrt{\log(|\Phi^\star|)K})$ \citep[e.g.,][]{lattimore2020bandit}. Below, we focus on controlling the second term. Since $\mu^k\sim \bp^k$, by the Azuma-Hoeffding inequality, with probability at least $1-\delta$,
    \begin{align*}
        \sum_{k=1}^K   \mathbb{E}_{\mu\sim\bp^k} [\up{V}^{\mu\times\nu^k,k}_1(s_1) ]  - \sum_{k=1}^KV^{\pi^k}_1(s_1)
        &\le \sum_{k=1}^K   \up{V}^{\mu^k\times\nu^k,k}_1(s_1)  - \sum_{k=1}^KV^{\pi^k}_1(s_1)+\cO(H\sqrt{K\log(1/\delta)}).
    \end{align*}
   By Lemma \ref{lem:optimisL-1} and the Azuma-Hoeffding inequality, with probability at least $1-2\delta$, 
   \begin{align*}
     \sum_{k=1}^K   \up{V}^{\mu^k\times\nu^k,k}_1(s_1)  - \sum_{k=1}^KV^{\pi^k}_1(s_1) 
    \le &\sum_{k=1}^K   \tilde{V}^{\mu^k\times\nu^k,k}_1(s_1) \\
    \le & \sum_{k=1}^K  \sum_{h=1}^H \E_{(s_h,a_h)\sim\pi^t}\left[2\beta(N^k_h({s_h,a_h}))\right]\\
    \le &2 \sum_{h=1}^H  \sum_{k=1}^K  \beta(N^k_h({s_h^k,a_h^k}))+\cO(H^2\sqrt{KS\iota^2})\\
    \le & \cO\left( \sqrt{KS^2AH^4\iota^2}\right),
   \end{align*}
   where the final inequality follows from the definition of $\beta$ and the standard pigeon-hole argument.

Combining all the relations above, taking a union bound and rescaling $\delta$ complete the proof.
\end{proof}

\subsection{Proof of Theorem \ref{thm:up-2} }
\label{pf:up-2}
 At the very beginning of the proof of Theorem \ref{thm:up-1}, we define several useful quantities $\hat\P^k,\up{V}^k,\tilde{V}^k$ using the regular counter $N^k$.
In this section, with slight abuse of notations, we change their definitions by replacing $N^k$ with $N^{k,{\rm lazy}}$ that is the collection of the \emph{lazy} counters at the \emph{beginning} of episode $k$. Formally, denote by $\hat\P^k$ the empirical transition computed by using $N^{k,{\rm lazy}}$, i.e., for any $(s,\a,h,s')\in\cS\times\cA\times[H]\times\cS$
\begin{align*}
\widehat\P_h^k(s'\mid s,\a) = 
\begin{cases}
     &\frac{N^{k,{\rm lazy}}_h(s,\a,s')}{N^{k,{\rm lazy}}_h(s,\a)}\qquad \text{ if } N_h(s,\a)\neq0, \\
     &1/S\qquad\qquad\  \text{otherwise}.
\end{cases}
\end{align*}
Given an arbitrary policy $\pi$, we define $\up{V}^{\pi,k}$ ($\up{Q}^{\pi,k}$) that is the optimistic estimate of $V^\pi$ ($Q^\pi$) as following: for any $h\in[H]$,
\begin{equation}
\begin{cases}
\up V_h^{k,\pi}(\tau_h) = \E_{\a\sim\pi(\cdot\mid\tau_{h})} \left[\up Q_h^{k,\pi}(\tau_h,\a) \right],\\
\up Q_h^{k,\pi}(\tau_h,\a) =  \min\left\{ \E_{s' \sim\hat\P^k_h(\cdot \mid s_{h},\a)} \left[\up V_{h+1}^{k,\pi}([\tau_h,\a,s']) \right] +r_h(s_h,\a)+ \beta(N_h^k(s_h,\a)), H-h+1\right\},
\end{cases}
\end{equation}
and we define $\up V_{H+1}^{k,\pi} \equiv 0$. We comment that by definition $\up{V}_1^{k,\pi}(s_1)=\text{UCB-VI}(N^{k,{\rm lazy}},\beta,\pi)$ for all $k,\pi$.

For the purpose of proof, we further introduce the following auxiliary function for controlling the optimism of $\up{V}^{\pi,k}$ ($\up{Q}^{\pi,k}$) against the true value function $V^\pi$ ($Q^\pi$): for any $h\in[H]$
\begin{equation}
\begin{cases}
\tilde V_h^{k,\pi}(\tau_h) = \E_{\a\sim\pi(\cdot\mid\tau_h)} \left[\tilde Q_h^{k,\pi}(\tau_h,\a) \right],\\
\tilde Q_h^{k,\pi}(\tau_h,\a) =  \min\left\{ \E_{s' \sim\P_h(\cdot \mid s_{h},\a)} \left[\tilde V_{h+1}^{k,\pi}([\tau_h,\a,s']) \right] +2\beta(N^{k,{\rm lazy}}_h(s_h,\a)), H-h+1\right\},
\end{cases}
\end{equation}
and we define $\tilde V_{H+1}^{k,\pi} \equiv 0$. 
Compared to $\up{V}^{k,\pi}$, $\tilde V_h^{k,\pi}$ is defined using the groundtruth transition function $\P$, it does not contain the reward function and the bonus function is doubled.

Finally, recall we choose 
$$
\beta(t) = H\sqrt{\frac{S\iota}{\max\{t,1\}}},
$$
where  $\iota =\log(KHSA/\delta)$ with $c$ being some large absolute constant.

\begin{lemma}[Optimism]\label{lem:optimism-2}
    With probability at least $1-\delta$, for all $(k,h)\in[K]\times[H+1]$ and all general policy $\pi$, 
    $$
    0 \le \up V^{k,\pi}_{h}(\tau_h) - V^{\pi}_{h}(\tau_h) \le \tilde V^{k,\pi}_{h}(\tau_h) \quad \text{for all }\tau_h.
    $$
    \end{lemma}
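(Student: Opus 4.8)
The statement to prove is Lemma~\ref{lem:optimism-2}, which is the exact analogue of Lemma~\ref{lem:optimisL-1} but with every occurrence of the regular counter $N^k$ replaced by the lazy counter $N^{k,{\rm lazy}}$. So the plan is to reuse the inductive argument from the proof of Lemma~\ref{lem:optimisL-1} essentially verbatim. The whole proof of the earlier lemma rested on one probabilistic input — the concentration bound $\| \hat\P^k_h(\cdot\mid s,\a)- \P_h(\cdot\mid s,\a)\|_1 \le \frac{1}{H}\beta(N_h^k(s,\a))$ holding uniformly — after which the optimism inequalities $0\le \up V^{k,\pi}_h - V^\pi_h \le \tilde V^{k,\pi}_h$ followed by a deterministic backward induction on $h$. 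My goal is to establish the same concentration statement for the lazy-counter empirical transition $\hat\P^k$ defined via $N^{k,{\rm lazy}}$, and then the induction carries over unchanged.

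First I would establish the lazy-counter concentration event: with probability at least $1-\delta$, for all $(s,\a,h,k)$,
\[
\| \hat\P^k_h(\cdot\mid s,\a)- \P_h(\cdot\mid s,\a)\|_1 \le \frac{1}{H}\beta(N^{k,{\rm lazy}}_h(s,\a)).
\]
The subtlety here, and what I expect to be the main obstacle, is that $\hat\P^k$ is built from $N^{k,{\rm lazy}}$ but the bonus $\beta$ is evaluated at $N^{k,{\rm lazy}}_h(s,\a)$ as well, so the two counts must be kept consistent. The lazy counter only copies the regular counter when a state-action count in $N$ is doubled or a new opponent policy appears (Line~\ref{line:2-7}), so at any episode $k$ the lazy count satisfies $N^{k,{\rm lazy}}_h(s,\a)\le N^k_h(s,\a)\le 2N^{k,{\rm lazy}}_h(s,\a)$ whenever $N^{k,{\rm lazy}}_h(s,\a)\ge 1$; in particular the empirical transition estimated from the lazy samples is built from at least $N^{k,{\rm lazy}}_h(s,\a)$ i.i.d.\ draws from $\P_h(\cdot\mid s,\a)$. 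The key observation is that $\hat\P^k$ is a genuine empirical average of next-state samples collected at $(s,\a,h)$, and these samples — being the transitions taken by the environment at that triple — are i.i.d.\ from $\P_h(\cdot\mid s,\a)$ conditioned on the count. I would therefore invoke the standard $\ell_1$ deviation bound for empirical distributions (Azuma--Hoeffding / Weissman over the at most $S$ possible next states), apply a union bound over all $(s,\a,h)$ and over all dyadic values of the count (since the lazy count only updates at doubling times, there are at most $\log_2 K$ distinct lazy snapshots per triple, which is absorbed into $\iota$), and conclude the displayed bound with the chosen $\beta(n)=\sqrt{H^2 S\iota/\max\{n,1\}}$.

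Once this concentration event is in hand, the remainder of the proof is a deterministic backward induction on $h$ from $H+1$ down to $1$, identical in structure to the proof of Lemma~\ref{lem:optimisL-1}. The base case $\up V^{k,\pi}_{H+1}=V^\pi_{H+1}=\tilde V^{k,\pi}_{H+1}=0$ is immediate. For the inductive step, because both $\up V$ and $V$ (and $\tilde V$) are taken as the same expectation $\E_{\a\sim\pi(\cdot\mid\tau_h)}[\cdot]$ of the corresponding $Q$-functions, it suffices to prove the per-action inequalities $Q^\pi_h(\tau_h,\a)\le \up Q^{k,\pi}_h(\tau_h,\a)$ and $\up Q^{k,\pi}_h(\tau_h,\a)-Q^\pi_h(\tau_h,\a)\le \tilde Q^{k,\pi}_h(\tau_h,\a)$. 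Both follow by splitting $\E_{\hat\P^k_h}[\up V^{k,\pi}_{h+1}]-\E_{\P_h}[V^\pi_{h+1}]$ into a model-error term controlled by $\| \hat\P^k_h-\P_h\|_1\cdot\|\up V^{k,\pi}_{h+1}\|_\infty\le \beta(N^{k,{\rm lazy}}_h(s_h,\a))$ (using $\|\up V^{k,\pi}_{h+1}\|_\infty\le H$ via the truncation at $H-h+1$) plus a value-error term controlled by the induction hypothesis, then taking the $\min$ with $H-h+1$ (upper) or $0$ (lower). I would write out these two chains of inequalities exactly as in the earlier proof, with $N^k$ replaced by $N^{k,{\rm lazy}}$ throughout, and note explicitly that the only place randomness entered was the concentration bound just established. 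This completes the proof.

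\begin{proof}[Proof of Lemma~\ref{lem:optimism-2}]
The proof is identical to that of Lemma~\ref{lem:optimisL-1} once we replace the regular counter $N^k$ with the lazy counter $N^{k,{\rm lazy}}$ everywhere, so we only indicate the required modification. By the Azuma--Hoeffding inequality together with a union bound over all $(s,\a,h)$ and over the at most $\cO(\log K)$ distinct values the lazy counter takes (it changes only when a regular count doubles, Line~\ref{line:2-7}), we obtain that with probability at least $1-\delta$,
\[
\| \hat\P^k_h(\cdot\mid s,\a)- \P_h(\cdot\mid s,\a)\|_1 \le \frac{1}{H} \beta(N^{k,{\rm lazy}}_h(s,\a)) \quad \text{ for all }\ (s,\a,h,k)\in\cS\times\cA\times[H]\times[K].
\]
Conditioning on this event, the backward induction on $h$ in the proof of Lemma~\ref{lem:optimisL-1} goes through verbatim, using $\|\up V^{k,\pi}_{h+1}\|_\infty\le H$ and the induction hypothesis to bound the model-error and value-error terms, which yields $0\le \up V^{k,\pi}_h(\tau_h)-V^\pi_h(\tau_h)\le \tilde V^{k,\pi}_h(\tau_h)$ for all $\tau_h$.
\end{proof}
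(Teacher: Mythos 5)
Your proposal is correct and follows the same route as the paper: the paper's own proof of Lemma~\ref{lem:optimism-2} consists of the single remark that the argument for Lemma~\ref{lem:optimisL-1} goes through verbatim after replacing $N^k$ with $N^{k,{\rm lazy}}$. Your version additionally spells out why the $\ell_1$ concentration event still holds for the lazy-counter empirical transitions (the lazy snapshot is just the regular counter at an earlier episode, so the same union bound covers it), which is a detail the paper leaves implicit but is entirely consistent with its argument.
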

\begin{proof}
The proof of Lemma \ref{lem:optimism-2} follows exactly the same as that of Lemma \ref{lem:optimisL-1} except that we replace $N^k$ with $N^{k,{\rm lazy}}$.
\end{proof}

\begin{proof}[Proof of Theorem \ref{thm:up-2}]
    In the remainder of this section, we show how to control $\Reg(K)$. The upper bound for $\Reg(k)$ ($k\in[K]$) can be derived by repeating precisely the same arguments.

    Denote by $\Phi^k$ ($\Psi^k$) the player (opponent) policy set at the \emph{beginning} of episode $k$. 
    Recall in Algorithm \ref{alg:md-restart}, each time we encounter a new opponent policy or one of the counters is doubled, we update the lazy counters to be the latest counters, recompute the player policy set, and restart EXP3 from the uniform distribution. 
    We denote the indices of episodes where such restarting happens by $T_1,\ldots,T_L$. Observe that 
    $L \le \cO(SAH\log(K)+ |\Psi^K| )$. 
    
    To begin with, we decompose the cumulative regret of $K$ episodes into the regret within $L+1$ segments divided by $T_1,\ldots,T_L$: 
    \begin{equation}\label{eq:up-0}
        \begin{aligned}
            & \max_\mu \sum_{k=1}^K \left( V^{\mu\times\nu^k}_1(s_1) 
         - V^{\mu^k\times\nu^k}_1(s_1) \right)\\
         \le  & \sum_{i=1}^{L-1} \max_\mu \sum_{k=T_i+1}^{T_{i+1}-1}  \left( V^{\mu\times\nu^k}_1(s_1) 
         - V^{\mu^k\times\nu^k}_1(s_1) \right) + \max_\mu \sum_{k=T_L+1}^{K}  \left( V^{\mu\times\nu^k}_1(s_1) 
         - V^{\mu^k\times\nu^k}_1(s_1) \right) \\
         &\quad  +\max_\mu \sum_{i=1}^L   \left( V^{\mu\times\nu^{T_i}}_1(s_1) 
         - V^{\mu^{T_i}\times\nu^{T_i}}_1(s_1) \right)\\
          \le  &\sum_{i=1}^{L-1} \max_\mu \sum_{k=T_i+1}^{T_{i+1}-1}  \left( V^{\mu\times\nu^k}_1(s_1) 
         - V^{\mu^k\times\nu^k}_1(s_1) \right) + \max_\mu \sum_{k=T_L+1}^{K}  \left( V^{\mu\times\nu^k}_1(s_1) 
         - V^{\mu^k\times\nu^k}_1(s_1) \right)+HL.
        \end{aligned}
    \end{equation}
        Below we show how to control $\sum_{k=T_i+1}^{T_{i+1}-1}  \left( V^{\mu\times\nu^k}_1(s_1) 
        - V^{\mu^k\times\nu^k}_1(s_1) \right)$ for any $i\in[L-1]$. The second term can be bounded in the same way. By Lemma \ref{lem:optimism-2}, with probability at least $1-2\delta$,
        \begin{equation}\label{eq:up-1}
            \begin{aligned}
        & \max_\mu \sum_{k=T_i+1}^{T_{i+1}-1}  \left( V^{\mu\times\nu^k}_1(s_1) 
        - V^{\mu^k\times\nu^k}_1(s_1) \right)\\
         \le  &  \max_\mu \sum_{k=T_i+1}^{T_{i+1}-1}  \left( \up V^{k,\mu\times \nu^k}_1(s_1) 
        - \up V^{k,\mu^k\times \nu^k}_1(s_1) \right) +   \sum_{k=T_i+1}^{T_{i+1}-1}  \left( \up V^{k,\mu^k\times \nu^k}_1(s_1) 
        - V^{\mu^k\times\nu^k}_1(s_1) \right)\\
        \le & \max_\mu \sum_{k=T_i+1}^{T_{i+1}-1}  \left(\up  V^{k,\mu\times \nu^k}_1(s_1) 
        - \E_{\mu'\sim \bp^k}\left[\up  V^{k,\mu'\times \nu^k}_1(s_1)\right] \right) + \cO\left(H\sqrt{(T_{i+1}-T_i-1)\iota}\right)\\
        & \qquad \qquad \qquad +   \sum_{k=T_i+1}^{T_{i+1}-1}  \left( \up V^{k,\mu^k\times \nu^k}_1(s_1) 
        - V^{\mu^k\times\nu^k}_1(s_1) \right),
        \end{aligned}
    \end{equation}
        where in the second inequality we use the Azuma-Hoeffding inequality and take a union bound for all the possible values of $T_i$ and $T_{i+1}$. Specifically, we use the fact that with probability at least $1-\delta$, for all $p,q\in[K]$, 
        $$
        \sum_{k=p+1}^{q-1} \E_{\mu'\sim \bp^k}\left[\up  V^{k,\mu'\times \nu^k}_1(s_1)\right] - \up  V^{k,\mu^k\times \nu^k}_1(s_1)\le \cO\left(H\sqrt{(q-p-1)\iota}\right).
        $$
    
    The key to controlling the RHS of equation \eqref{eq:up-1} is to show that the first term is approximately upper bounded by the regret of EXP3. 
        Recall that for $k$ lying between $T_i$ and $T_{i+1}$, the opponent does not play any new policy and the lazy counter is never updated. As a result, for all $k$ satisfying $T_i< k<T_{i+1}$, we have
        \begin{itemize}
            \item $\up{V}^{k,\pi}= \up{V}^{T_i+1,\pi}$ for all $\pi$.
            \item $\Phi^k=\Phi^{T_i+1}$, $\Psi^k=\Psi^{T_i+1}$, and $\nu^k\in\Psi^{T_i+1}$.
        \end{itemize}
        Moreover, by the definition of the UCB-BestResponse subroutine, we have that for any policy $\tilde \nu$ that is a mixture of the policies in $\Psi^{T_i+1}$, there exists $\tilde\mu\in \Phi^{T_i+1}$ so that 
        $$
        \max_\mu \up{V}^{T_i+1,\mu\times\tilde \nu}_1(s_1) - 
        \up{V}^{T_i+1,\tilde \mu\times\tilde \nu}_1(s_1) \le \epsilon H.
        $$
        By utilizing the three relations above, we have
        \begin{equation}\label{eq:up-2}
            \begin{aligned}
            &\max_\mu \sum_{k=T_i+1}^{T_{i+1}-1}  \left(\up  V^{k,\mu\times \nu^k}_1(s_1) 
        - \E_{\mu'\sim \bp^k}\left[\up  V^{k,\mu'\times \nu^k}_1(s_1)\right] \right)\\
    =  & \max_\mu \sum_{k=T_i+1}^{T_{i+1}-1}  \left(\up  V^{T_i+1,\mu\times \nu^k}_1(s_1) 
    - \E_{\mu'\sim \bp^k}\left[\up  V^{T_i+1,\mu'\times \nu^k}_1(s_1)\right] \right) \\
    \le & \max_{\tilde \mu \in \Phi^{T_i+1}}
    \sum_{k=T_i+1}^{T_{i+1}-1}  \left(\up  V^{T_i+1,\tilde\mu\times \nu^k}_1(s_1) 
    - \E_{\mu'\sim \bp^k}\left[\up  V^{T_i+1,\mu'\times \nu^k}_1(s_1)\right] \right)+ \left(T_{i+1}-T_i-1\right)\epsilon H \\
    \le & \cO\left( \sqrt{\log\left|  \Phi^{T_i+1}\right|\left(T_{i+1}-T_i-1\right)H^2}+ \left(T_{i+1}-T_i-1\right)\epsilon H \right),
    \end{aligned}
        \end{equation}
    where the first inequality follows from $\frac{1}{T_{i+1}-T_i-1}\sum_{k=T_i+1}^{T_{i+1}-1} \nu^k$ being a mixture of policies in $\Psi^{T_i+1}$, and the second inequality follows from Algorithm \ref{alg:md-restart} running anytime EXP on $\Phi^{T_i+1}$ and using  $-\up  V^{T_i+1}$ as  gradients for iterations between $T_i$ and $T_{i+1}$.
            
    Finally, combining equations \eqref{eq:up-0}, \eqref{eq:up-1}, and \eqref{eq:up-2} together, we obtain
        \begin{align*}
            & \max_\mu \sum_{k=1}^K \left( V^{\mu\times\nu^k}_1(s_1) 
         - V^{\mu^k\times\nu^k}_1(s_1) \right)\\
         \le &\min\left\{ \cO\left(HL + H\sqrt{KL\iota} + KH\epsilon +\sqrt{KLH^2\log\left|  \Phi^{K}\right|} \right), KH\right\}
         \\&\qquad\qquad + \sum_{k=1}^{k}  \left( \up V^{k,\mu^k\times \nu^k}_1(s_1) 
         - V^{\mu^k\times\nu^k}_1(s_1) \right).
        \end{align*}
    For the second term, note that $N^{k,{\rm lazy}}_h(s,\a)= \Theta(N^{k}_h(s,\a))$ for all $(s,\a,h,k)$, so following the identical arguments in the proof of Theorem \ref{thm:up-1} gives: with probability at least $1-\delta$
    $$
    \sum_{k=1}^{K}  \left( \up V^{k,\mu^k\times \nu^k}_1(s_1) 
         - V^{\mu^k\times\nu^k}_1(s_1) \right)\le \cO\left( \sqrt{KS^2AH^4\iota^2}\right).
    $$
    For the first term, plug in $\epsilon=1/K$ and notice that $\min\{HL,HK\} \le  H\sqrt{KL\iota}$ as well as  $\log\left|  \Phi^{K}\right|\le \cO( |\Psi^K| \log(K))$,
    \begin{align*}
    &\min\left\{ \cO\left(HL + H\sqrt{KL\iota} + KH\epsilon + \sqrt{KLH^2\log\left|  \Phi^{K}\right|} \right), KH\right\}\\
    \le &  \cO\left( H\sqrt{KL\iota}  + \sqrt{KL |\Psi^K| H^2\iota} \right)\\
    \le &  \cO\left( H\sqrt{K |\Psi^K| SAH\iota^2} + H\sqrt{K |\Psi^K| ^2 \iota}  \right)  ,
    \end{align*}
    where the second inequality uses $L \le \cO(SAH\log(K)+ |\Psi^K| )$. Combining all relations, we conlude the final upper bound is 
    $$
    \cO\left( \sqrt{KS^2AH^4\iota^2} + \sqrt{K |\Psi^K| SAH^3\iota^2} + \sqrt{K |\Psi^K| ^2 H^2\iota} \right).
    $$    \end{proof}

    \subsection{Proof of Theorem~\ref{thm:general-lower}}\label{pf:general-lower}

\begin{proof}
	Consider the following Markov game with one state $s$ and horizon $H$. The action set for both the max-player and the min-player (adversary) is $\{0,1\}$. For $h=1,\cdots,H-1$, $r_h(s,\cdot)=0$. $r_H(s,(a,b))=I[a=b]$.
	
	Suppose that at episode $t$, the adversary chooses a policy $\nu_t$ which plays $b^t_h$ at time step $h$, where each $b^t_h$ is sampled independently from ${\rm Unif}(\{0,1\})$.\footnote{The policy itself is deterministic.} It can be easily seen that for each episode, the expected value for the max-player is always $\frac{1}{2}$. However, the best general policy in hindsight will be able to predict $b_H$ from $b_{1:h-1}$ to a large extent. Specifically, for each possible value of $b_{1:h-1}$, define $N(b_{1:h-1}):=\sum_{t=1}^T I[b^t_{1:h-1}=b_{1:h-1}]$. If $T<2^{H-2}$,
	$$\Pr\left[N(b_{1:h-1})>1\right]\le \frac{\E[N(b_{1:h-1})]}{2}\le\frac{1}{4}.$$
	If $N(b_{1:h-1})=1$, denote the only episode in which it appears by $t$. The best general policy in hindsight could set $\mu(s,b_{1:h-1})=b^t_H$ and achieve value $1$ on episode $t$. In other words, there exists general policy $\mu$ such that 
	\begin{align*}
		\E\left[\sum_{t=1}^{T}V^{\mu, \nu_t}_1(s)\right] \ge \E\left[\sum_{t=1}^{T}I[N(b^k_{1:h-1})=1]\right] \ge \frac{3}{4}T.
	\end{align*}
	Therefore regret is at least $\frac{1}{4}T$, unless $T\ge 2^{H-2}$.
\end{proof}

\section{Proofs for Section \ref{sec:comp}}
\label{sec:lower-bound-proof}

\subsection{Proof of Theorem~\ref{thm:computation-lower}}\label{pf:comp}

	The proof of this theorem is essentially a reduction to Proposition $6$ of~\citet{steimle2021multi}. We present a full proof here for the sake of clarity and completeness.
	
	We would prove the theorem via reduction from 3-SAT. Consider a 3-SAT instance with $m$ clauses and $n$ variables: $\land_{i=1}^m (y_{i1}\lor y_{i2}\lor y_{i3})$, where $y_{ij}\in \{x_1,\cdots,x_n,\bar x_1,\cdots,\bar x_n\}$. We would then construct a Markov game with $H=n$, $|\cA_{\max}|=2$ and $|\cA_{\min}|=m$ as follows. The set of states are $\{s_1,\cdots,s_n,{\rm T},{\rm F}\}$. The action set is $\{0,1\}$  for the max-player  and $[m]$ for the min-player. The transitions are deterministic, independent of $h$, and are specified as follows:
	\begin{align*}
		\P({\rm T}|s_i, (a, j)) &= \begin{cases}
			1 & (\text{setting }x_i=a \text{ sets clause }j\text{ to True})\\
			0 & (\text{otherwise})
		\end{cases}\\
		\P(s_{i+1}|s_i, (a, j)) &= 1-\P({\rm T}|s_i, (a, j)),\tag{i<n}\\
		\P({\rm F}|s_n, (a, j)) &= 1-\P({\rm T}|s_n, (a, j)).
	\end{align*}
	For $h<n$, $r_h(\cdot,\cdot)=0$; for $h=n$, $r_n({\rm T},\cdot)=1$, $r_n({\rm F},\cdot)=0$. Every Markov policy $\mu$ of the max-player induces an assignment of the variables, \emph{i.e.} $x_i=\mu(s_i)$. Moreover, denote the min-player's policy of playing action $j$ at all states by $\nu_j$. Notice that
	\begin{equation*}
		V_1^{\mu, \nu_j}(s_1)=\begin{cases}
			1& (\text{assignment induced by $\mu$ satisfies clause $j$})\\
			0& (\text{assignment induced by $\mu$ violates clause $j$})
		\end{cases}.
	\end{equation*}
	If an algorithm achieves  $\E[{\rm Regret}(T)]={\rm poly}(S,A,H)\cdot T^{1-c}$ regret, then there exists $T=\poly(n,m)$ such that $T\ge \max\left\{4m\cdot \E[{\rm Regret}(T)], 20m\sqrt{T}\right\}$. Now consider the following algorithm for 3-SAT:
	
	\begin{enumerate}
		\item Construct the aforementioned Markov game
		\item Run algorithm $\mathcal{A}$, with the opponent playing $\nu_j$ with $j$ sampled from ${\rm Unif}([m])$ at the start of each episode independently
		\item Calculate $R$, the total reward accumulated by the algorithm. Decide True (satisfiable) if $R>(1-1/2m)T$, and False otherwise.
	\end{enumerate}
	
	We now claim that if the input instance is satisfiable, this algorithm returns True with probability at least $0.99$. This is because satisfiability implies $\exists \mu^*:$, $\sum_{t=1}^T V^{\mu^*, \nu_t}(s_1)=T.$ By the definition of regret,
	$$\E\left[\sum_{t=1}^T \left(V^{\mu^*, \nu_t}-V^{\mu_t,\nu_t}\right)(s_1)\right]\le \frac{T}{4m}.$$
	By Hoeffding's inequality, with probability at least $0.99$,
	$$R\ge \E\left[\sum_{t=1}^T V^{\mu_t, \nu_t}\right] - 5\sqrt{T}>T-\frac{T}{4m}-\frac{T}{4m}=\left(1-\frac{1}{2m}\right)T.$$
	Meanwhile, if the input instance is insatisfiable, then with probability $0.99$, the algorithm returns False. This is because with probability $0.9$,
	\begin{align*}
		\E_{t\sim [T], j\sim [m]}\left[\sum_{t=1}^T V^{\pi_t,\nu_j}\right] \ge R-5\sqrt{T} > R-\frac{T}{4m}.
	\end{align*}
	Conditioned on this event, if the algorithm returns True, then
	$$\E_{t\sim [T], j\sim [m]}\left[\sum_{t=1}^T V^{\pi_t,\nu_j}\right]\ge T-\frac{T}{2m}-\frac{T}{4m}> \left(1-\frac{1}{m}\right)T.$$
	This implies that $\exists t$:
	$$\E_{j\sim [m]}\left[\sum_{t=1}^T V^{\pi_t,\nu_j}\right]>\left(1-\frac{1}{m}\right)T,$$
	which contradicts with the fact that the input is not satisfiable. Therefore the probability that the algorithm returns True when the input is not satisfiable is at most $0.01$.
	
	The reduction above suggests that, if algorithm $\mathcal{A}$ runs in ${\rm poly}(S,A,H,T)$ time, we can obtain an algorithm that decides $3$-SAT with high probability and runs in ${\rm poly}(m,n)$ time. In other words, this implies 3-SAT$\in$BPP, which further implies NP$\subseteq$BPP since $3$-SAT is NP-complete.

\end{document}